\newtheorem{THEOREM}{Theorem}
\newenvironment{theorem}{\begin{THEOREM}  }%
                        {\end{THEOREM}}
\newtheorem{LEMMA}[THEOREM]{Lemma}
                      {\end{LEMMA}}
\newtheorem{COROLLARY}[THEOREM]{Corollary}
\newenvironment{corollary}{\begin{COROLLARY}  }%
                          {\end{COROLLARY}}
\newtheorem{PROPOSITION}[THEOREM]{Proposition}
\newenvironment{proposition}{\begin{PROPOSITION}  }%
                            {\end{PROPOSITION}}
\newtheorem{DEFINITION}[THEOREM]{Definition}
\newenvironment{definition}{\begin{DEFINITION}  \rm}%
                            {\end{DEFINITION}}
\newtheorem{CLAIM}[THEOREM]{Claim}
                            {\end{CLAIM}}
\newtheorem{EXAMPLE}[THEOREM]{Example}
\newenvironment{example}{\begin{EXAMPLE} 
	 \rm}%
                            { \end{EXAMPLE}}
\newtheorem{REMARK}[THEOREM]{Remark}
                            {\end{REMARK}}
							\newtheorem{NOTATION}[THEOREM]{Notation}
							                            {\end{NOTATION}}
\tikzset{
    events/.style={ellipse, draw, align=center},
}
\newcommand{\rulesep}{\unskip\ \vrule\ }
\algnewcommand\algorithmicforeach{\textbf{for each}}
\newcommand{\IP}{I\!P}
\algnewcommand\algorithmicin{\textbf{in}}
\definecolor{Gray}{gray}{0.85}
\newcolumntype{a}{>{\columncolor{Gray}}c}
\definecolor{redi}{RGB}{211,211,211}
\definecolor{redii}{RGB}{200,50,30}
\definecolor{yellowi}{RGB}{255,251,0}
\definecolor{bluei}{RGB}{0,150,255}
\definecolor{blueii}{RGB}{135,247,210}
\definecolor{blueiii}{RGB}{91,205,250}
\definecolor{blueiv}{RGB}{115,244,253}
\definecolor{bluev}{RGB}{1,58,215}
\definecolor{orangei}{RGB}{240,143,50}
\definecolor{yellowii}{RGB}{222,247,100}
\definecolor{greeni}{RGB}{166,247,166}
\tikzset{ 
table/.style={
  matrix of nodes,
  row sep=-\pgflinewidth,
  column sep=-\pgflinewidth,
  nodes={rectangle,draw=black,text width=0.25ex,align=center},
  text depth=0.25ex,
  text height=0.25ex,
  nodes in empty cells
  },
texto/.style={font=\footnotesize\sffamily},
title/.style={font=\small\sffamily}
}
\newcommand\SlText[2]{%
   \node[title,right=.9cm of mat#1,anchor=east]
   at (mat#1.east)
   {#2};
 }
\newcommand\RowTitle[2]{%
\node[title,below=.1cm of mat#1,anchor=west]
  at (mat#1.south)
  {#2};
}
\begin{document}


\title{Learning Credal Sum-Product   Networks\thanks{This  work  was  supported  by  the  EPSRC  grant  \textit{Towards  Explainable  and  Robust Statistical AI: A Symbolic Approach.} Vaishak Belle was also partly supported by a Royal Society University Research Fellowship.}}
 



\author{\name Am\'elie Levray \email alevray@inf.ed.ac.uk  \\
\addr University of Edinburgh, UK 
       \AND
        \name Vaishak Belle \email vaishak@ed.ac.uk \\ 
  \addr      University of Edinburgh, UK \& Alan Turing Institute, UK}

%
%
%

\maketitle

\begin{abstract}

Probabilistic representations, such as Bayesian and Markov networks, are fundamental to much of statistical machine learning. Thus, learning probabilistic representations directly from data is a deep challenge, the main computational bottleneck being inference that is intractable. Tractable learning is a powerful new paradigm that attempts to learn distributions that support efficient probabilistic querying. By leveraging local structure, representations such as sum-product networks (SPNs) can capture high tree-width models with many hidden layers, essentially a deep architecture, while still admitting a range of probabilistic queries to be computable in time polynomial in the network size.  While the progress is impressive, numerous data sources are incomplete, and in the presence of missing data, structure learning methods nonetheless revert to  single distributions without  characterizing the loss in confidence. In recent work, credal sum-product networks, an imprecise extension of sum-product networks, were proposed to capture this robustness angle. In this work, we are interested in how such representations can be learnt and thus study how the computational machinery underlying tractable learning and inference can be generalized for imprecise probabilities.

\end{abstract}

\section{Introduction}

Probabilistic representations, such as Bayesian and Markov networks, are fundamental to much of statistical machine learning. Thus, learning probabilistic representations directly from data is a deep challenge.
Unfortunately, exact inference in probabilistic graphical models is intractable \citep{valiant1979complexity,DBLP:journals/jair/BacchusDP09}. Naturally, then, owing to the intractability of inference, learning also becomes challenging, since learning typically uses inference as a sub-routine \citep{koller2009probabilistic}. Moreover, 
even if such a representation is learned, prediction will suffer because inference has to be approximated. 
Tractable learning  is a powerful new paradigm that attempts to learn representations that support efficient probabilistic querying. Much of the initial work focused on low tree-width models \citep{bach2002thin}, but later, building on properties such as local structure \citep{chavira2008probabilistic}, data structures such as arithmetic circuits (ACs) emerged. These circuit learners can also represent high tree-width models and enable exact inference for a range of queries in time polynomial in the circuit size. Sum-product networks (SPNs) \citep{poon2011sum} are instances of ACs with an elegant recursive structure -- essentially, an SPN is a weighted sum of products of SPNs, and the base case is a leaf node denoting a tractable probability distribution (e.g., a univariate Bernoulli  distribution). In so much as deep learning models can be understood as graphical models with multiple hidden variables, SPNs can be seen as a tractable deep architecture. Of course, learning the architecture of standard deep models is very challenging \citep{bengio2009learning}, and in contrast, SPNs, by their very design, offer a reliable structure learning paradigm. While it is possible to specify SPNs by hand, weight learning is additionally required to obtain a probability distribution, but also the specification of SPNs has to obey conditions of completeness and decomposability, all of which makes structure learning an obvious choice. Since SPNs were introduced, a number of structure learning frameworks have been developed for these and related data structures, e.g., \citep{gens2013learning,hsu2017online,liang2017learning}. 
Such structures are also turning out to be valuable in relational settings: on the one hand, they have served as compilation targets for relational representations  \cite{DBLP:conf/uai/FierensBTGR11,Chavira2006}, and on the other, they have also been extended with a relational syntax  \cite{DBLP:conf/aaai/NathD15,DBLP:conf/ijcai/BroeckTMDR11}. 


The question that concerns us in this work is handling incomplete knowledge, including missing data. Classically, the issue of missing values or missing tuples is often ignored, and structure learning methods nonetheless revert to   single distributions without  characterizing the loss in confidence.  Indeed, deletion and naive imputation schemes can be particularly problematic \cite{van2018flexible}.  Imprecise probability models augment  classical probabilistic models with representations for incomplete and indeterminate knowledge  \cite{augustin2014introduction,citeulike:115039}. Credal networks \cite{Cozman2000199}, for example,  extend Bayesian networks in enabling sets of conditional probability measures to be associated with variables. 

The benefits of such models are many. 
 As argued in  \cite{DBLP:conf/isipta/MauaCCC17}, learning single distributions in the presence of incomplete knowledge may led to predictions that are unreliable and overconfident. 
 By allowing imprecise probability estimates, there is the added semantic value of knowing that the distribution of 
a random variable is not certain, which  indicates to the user that her  confidence in predictions involving the said random value should be appropriately adjusted. Furthermore, a quantified assessment of this confidence is possible.  Conversely, if we have a pre-trained model but whose data sources are known to have had  missing information, we may consider contamination strategies to capture the unreliability of our predictions  \cite{DBLP:conf/isipta/MauaCCC17}. 
In the context of probabilistic databases \cite{suciu2011probabilistic}, the value of such representations have also been argued  in  \citep{DBLP:conf/kr/CeylanDB16} as an elegant alternative to enforcing the closed-world assumption (CWA) \citep{DBLP:conf/adbt/Reiter77}. Recall that in  probabilistic databases,  all tuples in the database are accorded a confidence probability (resulting from information extraction over natural language sources, for example),  but all other facts have probability zero. The CWA is inappropriate because sources from which such representations are obtained are frequently updated, and it is clearly problematic to attach a prior probability of zero to some fact that we later want to consider plausible. In that regard, \citep{DBLP:conf/kr/CeylanDB16} consider extending probabilistic databases with credal notions to yield an \emph{open-world} representation:  all tuples not in the database take on probabilities from a default probability interval, corresponding to  lower and upper probabilities, which can then be updated on observing such facts in newer sources.\footnote{This perspective where  the domain of constants is known in advance  
can be contrasted with a second  notion of ``open''-ness  that does not require  the constants to be known in advance \cite{russell2015unifying,belle2017open,grohe2019probabilistic}. In these latter formulations, the universe  is often countably infinite.}  Thus, 
 we obtain a  principled means of handling ``unseen" observations.  Analogously, when computing the probability of queries, the  representation allows the user to recognise that there is incomplete knowledge, and so she can use interval probabilities for assessing the degree of confidence in a returned answer. Overall, the credal representation can help us  model, query and learn with incomplete data in a principled manner. Since we expect most sources of data to be incomplete in this sense of discovering unseen facts from new and updated sources, the benefits are deeply significant.

In the context of tractable models, 
\citep{DBLP:conf/isipta/MauaCCC17} proposed a unification of the credal network semantics and SPNs, resulting in so-called  credal SPNs, and study algorithms and complexity results for  inference tasks. In this work, we are interested in how such representations can be learnt and thus study how the computational machinery underlying tractable learning and inference can be generalized for imprecise probabilities. We report on both formal and empirical results on that investigation, with very promising observations.\footnote{For reasons of space, most of the  examples as well as expanded notes on some of the concepts raised in the paper can be found in the Appendix.} We hope this work would be useful for the emerging interest in handling incomplete knowledge and missing data for large-scale machine learning.

\section{Preliminaries}
%


\subsection{Sum-product Networks (SPNs)}

SPNs are rooted acyclic graphs whose internal nodes are sums and products, and leaf nodes are tractable distributions, such as Bernoulli and Gaussian distributions \citep{poon2011sum,gens2013learning}. More formally, 


\begin{definition}[Syntactic definition]
An SPN over the set of variables $X_1,...,X_n$ is a rooted directed acyclic graph whose leaves are the indicators $x_1,...,x_n$ and $\overline{x}_1,...,\overline{x}_n$ and whose internal nodes are sums and products nodes. Each edge $(i,j)$ from a sum node $i$ has a non-negative weight $\omega_{ij}$. 
\end{definition}

\begin{definition}[Semantic definition]
The value of a product node is the product of the values of its children. The value of a sum node is $\sum_{j\in Ch(i)} \omega_{ij}\upsilon_j$, where $Ch(i)$ are the children of $i$ and $\upsilon_j$ is the value of node $j$. The value of an SPN is the value of its root.
\end{definition}

SPNs allow for time linear computation of conditional probabilities, among other inference computations, by means of 
a bottom-up pass from the indicators to the root. An SPN is therefore a function of the indicator variables $S(x_1,...,x_n,\overline{x_1},...,\overline{x_n})$. Evaluating a SPN for a given configuration of the indicator $\lambda=(x_1,...,x_n,\overline{x}_1,...,\overline{x}_n)$ is done by propagating from the leaves to the root.
When all indicators are set to 1 then $S(\lambda)$ is the partition function of the unnormalised probability distribution. (If $S(\lambda)=1$ then the distribution is normalised.) The scope of an SPN is defined as the set of variables appearing in it. An essential property of SPNs as a deep architecture with tractable inference is the one of validity  \citep{poon2011sum}: intuitively, if an SPN is valid, its expansion includes all monomials present in its network polynomial \citep{darwiche2003differential}.  (Cf. Appendix for an example.)

\subsection{Credal Sum-product Networks (CSPNs)}\label{sec_cred_spn}
In recent work \citep{DBLP:conf/isipta/MauaCCC17}, so-called credal sum-product networks (CSPNs) were proposed, which are a class of imprecise probability models that unify the credal network semantics with SPNs. That work was particularly aimed at analysing the robustness of classification in SPNs learned from data: first they consider  a standard SPN \citep{gens2013learning}, and from that they obtain the CSPN by an ``$\epsilon$-contamination" of the SPN (cf. \citep{DBLP:conf/isipta/MauaCCC17}). 



A CSPN is defined in the same way as a SPN except that it allow weights on sum nodes to vary in a closed and convex set, which then banks on the notion of a \textbf{credal set}. 

\begin{definition}[Credal Sum-Product network]
A CSPN over variables $X_1,\dots, X_n$ is a rooted directed acyclic graph whose leaves are the indicators ($x_1,\dots , x_n, \overline{x}_1,\dots, \overline{x}_n)$ and whose internal
nodes are sums and products. Each sum node $i$ is associated to a credal set $K_i$. A credal set is a convex set of probability distributions. 
\end{definition}

%

A credal set can be interpreted as a set of imprecise beliefs meaning that the true probability measure is in that set but due to lack of information, that cannot be determined. In order to characterize a credal set, one can use a (finite) set of extreme points (edges of the polytope representing the credal set), probability intervals or linear constraints.\footnote{It is important to note that  the number of extreme points can reach $N!$ where $N$ is the number of interpretations \citep{DBLP:journals/ijar/Wallner07}.} (Cf. Appendix for an example.)


Although \citep{DBLP:conf/isipta/MauaCCC17} do not consider the notion of validity for their structures, we can update  the existing formulation and define a CSPN to be \textit{valid} if it satisfies the same conditions as would valid SPNs, namely \textit{consistency} and \textit{completeness}. \smallskip 

Inference in CSPN boils down to the computation of minimum and maximum values for $\lambda$, that is, a given configuration of the indicators.  This amounts to computing the lower and upper likelihood of evidence, which can be performed in almost the same fashion as for SPNs. More precisely, we evaluate a CSPN from leaves toward the root. The value of a product node is the product of the values of its children (as in SPNs).   The value of a sum node is given by optimising (either maximising or minimising) the following equation:
 $\sum_{j\in Ch(i)} w_{ij} * v_j$  
where the weights $w_{ij}$ are chosen from the credal set $\textbf{w}_i$.  (Here, weights are chosen subject to constraints that  define a probability distribution within the credal set.)  The value of a CSPN, denoted $S(x_1,\dots, x_n, \overline{x_1}, \dots, \overline{x_n})$, is the value of its root node.\footnote{We remark that in this paper for simplicity, we restrict ourselves to  Boolean variables, but this is easily generalised to many-valued variables.}  

In particular, \citep{DBLP:conf/isipta/MauaCCC17} 
provide ways to compute the minimum and maximum values for a given configuration $\lambda$  of the indicator variables, as well as  ways to compute conditional expectations by means of a linear program solver to solve (maximise or minimise) and find the best weight to propagate.  But that can be challenging:  since we are interested in learning SPN structures, such inference tasks would need to be solved in each iteration. Thus, we argue that reasoning with extreme points of the credal set turns out to be more practical and efficient. In the interest of space, the Appendix includes a brief exposition on this idea. 




%
%
%
%
%


\section{Learning a Credal Sum-product Network}\label{sec_learn}

Algorithms to learn  sum-product networks are maturing; we can broadly categorise them in terms of discriminative training and generative learning. Discriminative training \citep{DBLP:conf/nips/GensD12} learns conditional probability distributions while generative learning \citep{gens2013learning} learn a joint probability distribution.  In the latter paradigm, \citep{gens2013learning} propose the  algorithm LearnSPN that starts with a single node
representing the entire dataset, and recursively adds product and sum nodes that divide the dataset
into smaller datasets until a stopping criterion is met. Product nodes are created using group-wise
independence tests, while sum nodes are created performing clustering on the row instances. The
weights associated with sum nodes are learned as the proportion of instances assigned to a cluster.

In this paper, we are tackling learning credal sum-product networks from missing values. In the sequel, in order to illustrate the different parts of the algorithm, we will refer to the following table containing a dataset with missing values. More precisely, let us consider a dataset that contains 600 instances on 7 (Boolean) variables. We consider the particular case of missing data  on two variables $A$ and $C$ with $1\%$ of missing data depicted in Table \ref{tb_mar_data}. 

\begin{table}[ht!] \footnotesize
\centering
\caption{Dataset example}\label{tb_mar_data}
\begin{tabular}{c|ccccccc}
instance number & A & B & C & D & E & F & G\\
\hline
1&1&1&1&1&1&1&1\\
2&?&1&?&0&0&1&1\\
3&?&1&?&0&0&0&1\\
4&?&0&1&0&0&1&0\\
5&?&0&?&1&0&0&0\\
6&?&1&?&0&1&1&1\\
7&?&1&?&0&0&0&1\\
8&0&1&1&1&1&1&1\\
\ldots  &&&&&&&\\
\end{tabular}
\end{table}

In this work, we propose a generative learning method to learn a CSPN from missing values. In the same fashion as LearnSPN does, we start from a single node and recursively add sum or product nodes. The next two subsections detail the process and equations related to adding a product or sum node.

\subsection{Group-wise Independence Test}\label{subsubsec_indep}
In the first loop of LearnCSPN algorithm (see Algorithm \ref{algo_complete}), we are first looking for a set of independent variables, and do so using a group-wise independence test. This consists in computing a G-value. 
This G-value is compared to a threshold. The lower the G-value is the more likely chance there is that the two variables are independent.

Formally, $G(X_1,X_2) = 2 \sum_{x_1}\sum_{x_2} m(\underline{C},\overline{C})(x_1,x_2)*\log \dfrac{m(\underline{C},\overline{C})(x_1,x_2)*|T|}{m(\underline{C},\overline{C})(x_1)*m(\underline{C},\overline{C})(x_2)},$ 
where  $m$ is the mean function, $\underline{C}$ and $\overline{C}$ are count functions that are detailed below.

We consider two count functions, that we denote $\underline{C}$ and $\overline{C}$. The first function takes into account only instances where the values of the  variables involved in the independence test are complete. More precisely, it counts the occurrences (number of instances) matching the values $x_1,x_2$ (analogously, $\underline{C(x_1)}$ counts the occurrences of $x_1$).  The second function $\overline{C}$ takes into account all instances. More precisely, $\overline{C}(x_1)$ counts the number of instances matching $x_1$ and adding to that every instance where $X=?$ is present. 
Formally, $\overline{C}$ is as follows: 
$\overline{C}(x_1,x_2) = |insts \models (x_1,x_2)| + |insts \models (?, x_2)| + |insts \models (x_1,?)|$


Intuitively, $\overline{C}$ acts as a gatekeeper that takes into account missing information and readjusts the $G$-value accordingly. The function we use to balance the two count functions is the mean, it allows us to not prioritise one case above the other.  A more elaborate measure other than the mean could be applied too, of course; for example, we could take into account the proportion of missing values and define a measure that is based on that proportion.

Let us consider the counts for (A,B) and (A,C) in Table \ref{tb_counts}. Every value of B is available. In contrast, for A and C we have some missing values. We can compute the $G$-value for the two functions $\underline{C}$ and $\overline{C}$. 
\begin{table}[ht!] \footnotesize
\centering
\caption{Counts for the tuples (A,B) and (A,C)}\label{tb_counts}
\begin{tabular}{|a|a||c|c||a|a|c|c|}
\hline
$\underline{C}(a,b)$ & 210 & $\overline{C}(a,b)$ & 214 & $\underline{C}(a)$ & 242 & $\overline{C}(a)$ & 248\\

$\underline{C}(a,\overline{b})$ & 32 & $\overline{C}(a,\overline{b})$ & 34 & $\underline{C}(\overline{a})$ & 352 & $\overline{C}(\overline{a})$ & 358 \\

$\underline{C}(\overline{a},b)$ & 304 & $\overline{C}(\overline{a},b)$ & 308 & $\underline{C}(b)$ & 514 & $\overline{C}(b)$ & 522\\

$\underline{C}(\overline{a},\overline{b})$ & 48 & $\overline{C}(\overline{a},\overline{b})$ & 50 & $\underline{C}(\overline{b})$ & 80 & $\overline{C}(\overline{b})$ & 84 \\

\hline
$\underline{C}(a,c)$ & 150 & $\overline{C}(a,c)$ & 156 & $\underline{C}(c)$ & 200 & $\overline{C}(c)$ & 212\\

$\underline{C}(a,\overline{c})$ & 92 & $\overline{C}(a,\overline{c})$ & 97 & & & &\\

$\underline{C}(\overline{a},c)$ & 50 & $\overline{C}(\overline{a},c)$ & 56 & $\underline{C}(\overline{c})$ & 394 & $\overline{C}(\overline{c})$ & 404\\

$\underline{C}(\overline{a},\overline{c})$ & 302 & $\overline{C}(\overline{a},\overline{c})$ & 307 & & & &\\
\hline
\end{tabular}
\end{table}

Here, $G(A,B) = 0.00457$ for $\underline{C}$ and  $G(A,B) = -2.61699$ for $\overline{C}$. As for (A,C) we have $G(A,C) = 33.3912$ for $\underline{C}$ and  $G(A,B) = 28.6183$ for $\overline{C}$ which implies that in both cases  A and B are independent and A and C are dependent. 



\subsection{Clustering Instances}\label{subsubsec_clus}
When no independent set of variables is found, we perform hard EM to cluster the instances. Intuitively, a cluster is a set of instances where variables mostly have  the same values. However, when dealing with missing values, it is less obvious how one is to evaluate an instance to a cluster. To that end, we will be using two measures of likelihood. Let us motivate that using an example: consider the following cluster $cl1$ which contains 40 instances including 36 complete instances and the cluster $cl2$ which contains 70 instances including 51 complete instances. Statistics of $cl1$ and $cl2$ are shown in Table \ref{tb_stats_clus}.\footnote{Missing values are only present for variables B and C.}

\begin{table}[ht!] \footnotesize
    \centering
    \caption{Statistics for clusters $cl1$ and $cl2$}
    \label{tb_stats_clus}
    \begin{multicols}{2}
    \begin{tabular}{c|ccc}
  $cl1$ & A & B & C \\\hline
    0 &  40 & 30 & 5  \\
    1 &  0 & 6 & 33  
    \end{tabular}
    
    \begin{tabular}{c|ccc}
  $cl2$ & A & B & C \\\hline
    0 &  60 & 6 & 61  \\
    1 &  10 & 45 & 9  
    \end{tabular} 
    \end{multicols}
\end{table}

When considering a complete instance (\textit{e.g.} 0 0 1), we only use one measure which we call the lower measure of log-likelihood, denoted  $\underline{\log}_{cl}(inst)$, which only takes into account the statistics of the known values as shown in Table \ref{tb_stats_clus}. More precisely, 
this is expressed as:\footnote{$2*smoo$ is the smoothing value multiplied with the number of possible values of $var$; since we consider a Boolean dataset, it is 2.} 
\begin{equation}
    \underline{\log}_{cl}(inst)= \sum_{var} \log((w_{val} + smoo) / (nb\_inst + 2*smoo))
\end{equation}

where $w_{val}$ is the number of instances where the value for the variable $var$ is also the value of the instance, $val$. $nb\_inst$ is the number of instances where the value for variable $var$ is known. 
For instance, given the complete instance $inst\_1$ = 0 0 1, we have $\underline{\log}_{cl1}(0 0 1) = -0.1434$ and $\underline{\log}_{cl2}(0 0 1) = -1.8787$.\\

If we consider the incomplete instance $inst\_2$ = 0 0 ?, we have $\underline{\log}_{cl1}(0 0 ?) = -0.0812$ and $\underline{\log}_{cl2}(0 0 ?) = -0.9914$. 
Note that statistics on incomplete instances for B and C are not used and yet it is clear that $cl1$ is the best cluster. This motivates   a second measure of log-likelihood, that we call upper log-likelihood $\overline{\log}_{cl}(inst)$. This measure takes into account the lower measure to which we had a log value of the worst case scenario:
\begin{equation}
\widetilde{\log}_{cl}(inst) = \sum_{var} \log((w + smoo) / (nb\_inst + 2*smoo))   - \log((inc + smoo) / (all\_inc + 2*smoo))     
\end{equation} 
where $w$ is the number of instances where the value for the variable $var$ is that value that is poorest fit. The term $inc$ corresponds to the number of instances in the cluster where the value is unknown but has been assigned to the worst case scenario when added to the cluster. Then, $all\_inc$ is the number of instances where the value is unknown. 

Formally,
\begin{equation}
    \overline{\log}_{cl}(inst) = \widetilde{\log}_{cl}(inst)+\underline{\log}_{cl}(inst).
\end{equation}
 Intuitively, we take into account all instances for the computation of the upper log-likelihood, and consider incomplete instances in a manner that does not exaggerate the uncertainty relative to the complete instances. In our example, with $inst\_2$ 0 0 ?,  we therefore have 
$\overline{\log}_{cl1}(00?) = -0.9355$ and $\overline{\log}_{cl2}(00?) = -1.8787$. 


Specifically, the lower bound refers to the SPN that would be learned if we only consider the complete instances. And the upper bound denotes the learning regime if every incomplete instance has been made complete in a way that fits the cluster. The overall algorithm that clusters the set of instances is given in Algorithm \ref{algo_cluster} in Appendix. Intuitively, by computing these two measures, we allow an instance to appear in more than one cluster. For an instance, we compute the value of the \textit{best} lower and upper log-likelihood, and the instance therefore belongs to every cluster for which the lower log-likelihood is better than the \textit{best} upper log-likelihood.  \\

Now that all instances have been clustered, we create a sum node with as many edges as clusters found. And to compute the weights associated to each edge, we are looking at the proportion of instances in each cluster. As explained in the clustering process, we allow an instance to belong to multiple clusters if this instance is incomplete. This implies that the intersection between clusters might not be empty, which facilitate the construction of probability interval for each edge. We compute the lower bound of the interval with the proportion of complete instances in the cluster and the upper bound with the proportion containing both complete and incomplete instances in the cluster (Cf. Appendix for examples).



The overall algorithm to learn a credal SPN is given in Algorithm \ref{algo_complete}.

\begin{algorithm} \footnotesize
\caption{LearnCSPN(T, V)}
\label{algo_complete}
\begin{algorithmic}[1]
\Require set of instances T and set of variables  V
\Ensure {a CSPN representing an imprecise probability distribution over V learned from T}
    \If{$|$V$|= 1$}
        \State{\Return {univariate imprecise distribution estimated from the variable's values in T}} 
    \Else 
        \State{partition V into approximately independent subsets V$_j$} 
        \If{success}
            \State{\Return{$\prod_j$ LearnCSPN(T, V$_j$)}} 
        \Else
            \State{partition T into subsets of similar instances T$_i$} 
            \State{\Return{$\sum_i$ $K_i$ * LearnCSPN(T$_i$, V)}} 
        \EndIf    
    \EndIf
\end{algorithmic}
\end{algorithm}


\subsection{Properties of LearnCSPN}

LearnCSPN of Algorithm \ref{algo_complete} holds some interesting properties as stated in the following.

\begin{theorem}\label{lem_com}
LearnCSPN learns a complete and consistent CSPN, and thus, a valid CSPN. 
\end{theorem}

%
%
%

LearnCSPN holds a second and fundamental property with respect to the downward compatibility of CSPNs with SPNs.  Indeed, when learning a CSPN from a complete dataset, the weights are no longer intervals but single degrees:  when there is no missing value in the dataset, the count function for the independence test matches the count function of LearnSPN. Analogously, the computation of the log-likelihood of an instance in a cluster corresponds to the computation of the lower log-likelihood. 

\begin{theorem}
LearnCSPN on a complete dataset is equivalent to LearnSPN.
\end{theorem}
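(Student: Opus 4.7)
The plan is to proceed by structural induction on the recursion of LearnCSPN (Algorithm \ref{algo_complete}), showing that at each of its three branch points, the LearnCSPN step coincides with the corresponding LearnSPN step whenever the input partition $T$ contains no missing values.

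First I would handle the base case, where $|V|=1$. In LearnCSPN we return a univariate imprecise distribution estimated from the (possibly incomplete) values in $T$. On a complete column, the lower and upper counts agree, so the imprecise interval $[l,u]$ collapses to a single point and we recover exactly the univariate distribution estimated by LearnSPN. Next, I would examine the independence test of Subsubsection \ref{subsubsec_indep}: the two count functions $\underline{C}$ and $\overline{C}$ differ only through the contribution of instances whose entry is ``?''. On a complete $T$, the three terms defining $\overline{C}$ collapse to $|insts \models (x_1,x_2)|$, so $\underline{C}=\overline{C}=C$ and any symmetric mean $m(\underline{C},\overline{C})$ equals $C$. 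Plugging into the modified $G$ yields exactly Equation~\eqref{eq_G_measure}, so the partition into approximately independent subsets $V_j$ is identical to that produced by LearnSPN.

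Third, I would treat the clustering branch of Subsubsection \ref{subsubsec_clus}. When $T$ is complete, every instance passed to Algorithm \ref{algo_cluster} has all values known, so the ``worst case scenario'' correction $\widetilde{\log}_{cl}(inst)$ sums over an empty set of unknown variables and is zero; hence $\overline{\log}_{cl}(inst)=\underline{\log}_{cl}(inst)$. The branch of Algorithm \ref{algo_cluster} that adds an instance to an additional cluster because $cll > bestCLL\_worstcasescenario$ then coincides with the strict best-cluster rule of LearnSPN, and each instance is assigned to exactly one cluster. Consequently, the induced cluster penalty criterion and the resulting partition of $T$ reduce to the hard-EM clustering used by LearnSPN.

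Fourth, I would address the weights in Subsection \ref{subsec_weight}. With $T$ complete, every cluster $T_i$ consists purely of complete instances: the lower bound $\underline{\mathbb{P}}(T_i)$ equals $|T_i|/|T|$ and the upper bound $\overline{\mathbb{P}}(T_i)$ equals the same quantity, since no incomplete instance contributes to the upper count. Each interval $K_i$ thus collapses to a single point, and the extreme-point extraction yields the unique distribution with $w_i=|T_i|/|T|$ — precisely the LearnSPN weight. Combining the four items by induction on the recursion depth establishes that the output tree, including its edge weights, is identical to LearnSPN$(T,V)$. The only subtle step, and the one I would present most carefully, is the clustering case: one must check that no spurious secondary assignments occur through the $bestCLL\_worstcasescenario$ branch when all $\widetilde{\log}$ corrections vanish, since this is where LearnCSPN has extra machinery beyond LearnSPN.
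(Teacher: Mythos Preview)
Your proposal is correct and follows essentially the same approach as the paper: both argue that, on complete data, the count functions $\underline{C}$ and $\overline{C}$ collapse to the standard count $C$, the clustering log-likelihoods collapse so each instance lands in a single cluster, and the resulting interval weights degenerate to the point weights of LearnSPN. The paper's own argument is only a short informal paragraph touching these same three points; your version is simply more careful, adding the explicit base case $|V|=1$ and framing the whole thing as a structural induction on the recursion, which is a welcome tightening rather than a different route.
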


It can also be shown that the inference regime is tractable (cf Appendix): basically, LearnCSPN learns a CSPN where internal nodes have at most one parent, and it is known \citep{DBLP:conf/isipta/MauaCCC17} that computing lower/upper conditional expectations of a variable in CSPNs takes at most
polynomial time when each internal node has at most one parent. 





\section{Experiments}\label{sec_exp}


We evaluate our algorithm to learn a credal sum-product network in two ways, by evaluating the accuracy of the learned model in terms of log-likelihood and the accuracy of the model when performing inference. In particular, we compare learnCSPN to the classical LearnSPN algorithm \citep{gens2013learning} which learns a SPN given full datasets (\textit{i.e.} no missing values).\footnote{\url{http://spn.cs.washington.edu/learnspn/}} To that end, we have applied LearnCSPN (resp. learnSPN) on various datasets drawn from \citep{gens2013learning}.\footnote{Characteristics of the  datasets can be found in Appendix, Table \ref{tab_dataset_stat}.} 


Learning from missing values widely use the  Expectation-Maximization (EM)  procedure.  In that sense, it is reasonable to compare the two learning regimes (LearnSPN and LearnCSPN) together as the models learned are from the same category of tractable graphical models, which makes it easier to relate. For the future, it will be worth comparing LearnCSPN to different models learned with the EM procedure to deal with missing values.\footnote{It is important to note a major distinction in our comparisons. Since  SPNs appeal to EM to handle missing values, this allows a reasonable comparison, but even then, there is principle difference. Recall that CSPNs allows the user to recognise that there is incomplete knowledge and uncertainty about predictions as characterized by imprecise probabilities. Indeed, as argued earlier, there are significant benefits to using credal and other imprecise probability representations that are not directly assessed in standard structure learning evaluations. Likewise, there has been considerable work on clustering with incomplete data \cite{wagstaff2004clustering}. But the objective in our work is not to simply produce the best clusters, but rather to capture the uncertainty present in the data so as to model random variables with imprecise probabilities.  Thus, drawing comparisons to such clustering approaches is not direct or obvious. More generally, to have a deeper understanding of the wider applicability of CSPNs, we need a more comprehensive analysis of their use in a wide variety of applications, together with a reasonable means to draw comparisons with related efforts. We hope this work serves as a useful starting point for such investigations. 
}

\subsection{Settings}

\textcolor{black}{For our  purposes, to study the effect of missing values, we consider two settings: $1\%$ and $5\%$ of missing values. As the datasets obtained are complete and since the comparison is done with learning a SPN on complete data, we have to remove values to instantiate a setting for learning the CSPN. We simply modify the existing datasets by removing values from instances on $1\%$ and $5\%$ of the set of instances.\footnote{That is, 99\% and 95\% of the instances are complete instances. Note also that missing values only appear on the training set.} The number of values removed on each instance can vary and is randomly chosen. (A degenerate case with all missing values on an instance may happen, but we do not explicitly test for it.) }

{To better illustrate the relevance of our results, we have computed three different log-likelihood scores based on the learned CSPN. These are: 
\begin{itemize}
\item A minimal log-likelihood is  given by considering the worst-case scenario, \textit{i.e.} when we select a compatible SPN with the weights that computes the worst log-likelihood score. 
\item An average log-likelihood is given by a compatible SPN with weights that gives the average log-likelihood-score.  

\item  Finally, an optimised log-likelihood that is given by considering the set of extreme points that maximises the log-likelihood score. More precisely, for each test case, we choose the compatible SPN that maximises the log-likelihood, the overall optimised value is the average over the test set.
\end{itemize}}




\subsection{Learning Credal SPNs}
{As discussed, we compare LearnSPN and LearnCSPN. On the former, the heuristics used are hard incremental EM over a naive Bayes mixture model to cluster the instances and a G-test of pairwise independence for the variable splits. The heuristics for LearnCSPN are the one given in Section \ref{sec_learn}. In both algorithms, we ran ten restarts through the subset of instances four times in random order. Table \ref{tab_result_learning} depicts the three score as defined in the settings,  the time of the learning computation (in s), and the size (in terms of number of nodes) of the learned SPNs and CSPNs. }

The results showed in \textbf{bold} relate to the significant improvement whether in terms of size of the model or in terms of log-likelihood scores.

\begin{table}[ht!]
    \centering
    \caption{Log-likelihood given missing values}
    \scalebox{.8}{
    \begin{tabular}{c||c|c|c|c|c|c|c}
        \multirow{3}{*}{Dataset} & \multicolumn{7}{c}{Log Likelihood score + \textbf{time} (in s)  / \textbf{size} (number of nodes)}  \\\cline{2-8}
        &    0\% & \multicolumn{3}{c|}{missing values on \textbf{1\%} of inst} & \multicolumn{3}{c}{missing values on \textbf{5\%} of inst} \\\cline{2-8}
        &    LearnSPN &  avg\_ll &  min\_ll &  opt\_ll &  avg\_ll &  min\_ll &  opt\_ll\\
        \hline
        \multirow{2}{*}{NLTCS} &    -6.114 & -6.194 & -6.981 & \textbf{-6.111} & -6.352 & -8.556 & \textbf{-5.308} \\\cline{2-8}
        &     47.981 /    2584  & \multicolumn{3}{c|}{ 932.832 /    53811}  & \multicolumn{3}{c}{ 891.506 /    5458} \\\hline
        \multirow{2}{*}{Plants} &    \textbf{-12.988} & -15.042 & -18.753 & -13.052 & -17.397  & -24.900  & -14.581  \\\cline{2-8}
        &     137.057 /    20714 & \multicolumn{3}{c|}{ 662.244 /     \textbf{8661}} & \multicolumn{3}{c}{ 4387.299 /    \textbf{14281}} \\\hline
        \multirow{2}{*}{Audio} &    -40.528 & -40.587 & -42.832 & \textbf{-33.862} & -41.774 & -45.178  & \textbf{-33.129}  \\\cline{2-8}
        &     770.684 /    35491 & \multicolumn{3}{c|}{ 7307.784  /     72822 } & \multicolumn{3}{c}{ 1231.921 /    \textbf{2021} }  \\\hline
        \multirow{2}{*}{Jester} &    -53.444 & -53.351 & -54.934 & \textbf{-48.412} &  -54.188 & -58.543 & \textbf{-48.092 } \\\cline{2-8}
        &     1082.599 /    47769 & \multicolumn{3}{c|}{ 4143.19 /     87467} & \multicolumn{3}{c}{ 13199.7 /    101708 }  \\\hline
         \multirow{2}{*}{Netflix} &    -57.408 & -57.453 & -59.337 & \textbf{-53.371} &  -58.817 & -62.860  & \textbf{-53.544}  \\\cline{2-8}
        &     1145.967 /    39447 & \multicolumn{3}{c|}{  5682.157 /     43229} & \multicolumn{3}{c}{ 16367.691 /    37674 }  \\\hline
        \multirow{2}{*}{Retail} &    -11.131 & -11.095 & -14.363 & \textbf{-8.114} & -11.129 & -20.745 & \textbf{-6.260} \\\cline{2-8}
        &     83.637 /    3790 & \multicolumn{3}{c|}{ 1631.054 /    6335} & \multicolumn{3}{c}{  5701.983 /    6937}  \\\hline
        \multirow{2}{*}{Pumsb-star} &    \textbf{-25.548} & -28.529 & -36.434 & -30.197 & -35.783 & -50.307 & -30.787 \\\cline{2-8}
        &     431.414 /    73949 &  \multicolumn{3}{c|}{4154.584 / 347880} & \multicolumn{3}{c}{ 1606.189 /     \textbf{2128}}  \\\hline
        \multirow{2}{*}{DNA} &    -85.272 & -85.596 & -93.196 & \textbf{-77.174} & -98.039 & -102.135  & \textbf{-81.192} \\\cline{2-8}
        &     66.37 /    22627 & \multicolumn{3}{c|}{ 220.202 /     18957} & \multicolumn{3}{c}{ 146.218 /    5069 }  \\\hline
        \multirow{2}{*}{MSWeb} &    -10.212 & -10.284 & -19.722 & \textbf{-7.344} &  -10.493 & -34.322  &  \textbf{-6.753} \\\cline{2-8}
        &     175.64 /    11000 &  \multicolumn{3}{c|}{ 12035.587 /    9145} & \multicolumn{3}{c}{ 26367.275 /    3836}  \\\hline
        \multirow{2}{*}{Book} &    -36.656 & -34.903 & -42.475 & \textbf{-21.446} & -35.112  & -60.904  & \textbf{-17.330}  \\\cline{2-8}
        &     304.339 /    95077 & \multicolumn{3}{c|}{ 5245.763 /     \textbf{46970}} & \multicolumn{3}{c}{ 17944.24 /    \textbf{68137}}  \\\hline
        \multirow{2}{*}{EachMovie} &    -52.836 & -53.569 & -64.057 & \textbf{-43.548} & -57.047  & -98.468  & \textbf{-41.812}  \\\cline{2-8}
        &     229.943 /    86066 & \multicolumn{3}{c|}{ 1732.422 /    \textbf{16808}} & \multicolumn{3}{c}{ 4508.381 /    \textbf{31141} }   \\\hline
        \multirow{2}{*}{WebKB} &    -158.696 & -156.738 & -164.892 & \textbf{-132.033} & -159.509 & -186.550 & \textbf{-109.624} \\\cline{2-8}
        &     493.989 /    241777 & \multicolumn{3}{c|}{ 2518.345 /     \textbf{21075}} & \multicolumn{3}{c}{ 3339.987 /    \textbf{22066}} \\\hline
        \multirow{2}{*}{Reuters-52} &    -85.995 & -88.537 & -111.189 & \textbf{-73.106} & -92.910  & -122.414  & \textbf{-73.259}  \\\cline{2-8}
        &     1507.077 /    427220 & \multicolumn{3}{c|}{ 11128.818 /     \textbf{22511}} & \multicolumn{3}{c}{ 18168.615 /    \textbf{4768}}  \\\hline
        \multirow{2}{*}{20 Newsgrp.} &    -159.701 & -153.567 & -166.219 & \textbf{-99.073} & -155.182  & -188.189  & \textbf{-92.349} \\\cline{2-8}
        &     20656.808 /    3047296 & \multicolumn{3}{c|}{  28115.16 /  \textbf{154923}} & \multicolumn{3}{c}{ 122947.065 /    \textbf{197478} }  \\\hline
        \multirow{2}{*}{BBC} &    -248.931 & -258.220 & -265.545 & \textbf{-195.062}  & -258.727  & -291.568  & \textbf{-197.744 } \\\cline{2-8}
        &     883.801 /    290423 & \multicolumn{3}{c|}{ 2623.016 /    \textbf{62194} }  & \multicolumn{3}{c}{ 1824.548 /    \textbf{9976}}  \\\hline
        \multirow{2}{*}{Ad} &    \textbf{-27.298} & -31.963 & -87.751 & -28.097 & -46.117 & -98.332  & -31.267 \\\cline{2-8}
        &     10197.343 /    690272 & \multicolumn{3}{c|}{ 9436.839 /    \textbf{59582}}   & \multicolumn{3}{c}{ 9383.391 /    \textbf{9772}}   \\\hline
    \end{tabular}}
    \label{tab_result_learning}
\end{table}

From Table \ref{tab_result_learning}, it is clear that in most datasets (13 out of 16), the best log-likelihood is given by the optimised log-likelihood computed on the learned CPSN. And this is observed regardless of whether we consider a small amount of missing data ($1\%$) or a slightly larger amount ($5\%$). For datasets where learnSPN gives the best log-likelihood, \textit{e.g.} Plants and Ad, we notice that the number of nodes in the learned CSPN is much lesser than the learned SPN. ({Also note that the difference between log-likelihood values in that case is not significant.})

One drawback of LearnCSPN is the computational time during learning. As explained in Section \ref{sec_learn}, it is due to the consideration of an instance in multiple clusters as well as the computation of more values during the clustering process. This can be improved by considering more efficient data structures to store the clustering information, which would make for interesting future work. We will now report on the efficiency of LearnCSPN with inference. 



\subsection{Inference in Credal SPNs}
In this section, we evaluate the accuracy of the learned model at query time. This amounts to computing the probability of a set of query variables given evidence. {Results given in plots report the average conditional log-likelihood (CLL) of the queries with respect to the evidence. As discussed  in \cite{gens2013learning},  it is sufficient to report the CLL since it approximates the KL-divergence between the inferred probability and the true probability.} 


For the sake of this second evaluation, we generate 1000 queries (for each dataset) from the test set, varying the proportion of query variables in $\{.10,.20,.30,.40,.50\}$ and fixing the proportion of evidence variables to $.30$. And in a second stage, we fix the proportion of query variables to $.30$ and vary the proportion of evidence variables in $\{0,.10,.20,.30,.40,.50\}$.  We report the results in plot graphs for datasets \textbf{book} and \textbf{20 Newsgrp}  (\textit{cf.} Figure \ref{fig_plot}).\footnote{The plot graphs for the remaining datasets can be found in the Appendix.} We only present  the computation of the optimised conditional log-likelihood given its general better accuracy in our prior experimental setup.  From Figure \ref{fig_plot}, it is clear that the positive results from Table \ref{tab_result_learning} are reaffirmed by the results of inference accuracy; overall, learned CSPNs offer  high prediction accuracy. 

\begin{figure}
\centering
\textbf{book}

\scalebox{.54}{
\begin{minipage}[t]{.8\textwidth}
  \centering
  \begin{tikzpicture}
\begin{axis}[xmax=5.5,
xlabel={\% evidence},
xtick={0,1,2,3,4,5},
xticklabels={0.0,0.1,0.2,0.3,0.4,0.5},
ylabel={Log-likelihood}]
\addplot coordinates {(0, -0.010719) (1, -0.010796) (2, -0.010533) (3, -0.010569) (4, -0.010271) (5, -0.010615) };
\addplot coordinates {(0, -0.006685) (1, -0.006593) (2, -0.006405) (3, -0.006424) (4, -0.006184) (5, -0.006378) };
\addplot coordinates {(0, -0.005389) (1, -0.005392) (2, -0.005247) (3, -0.005273) (4, -0.005121) (5, -0.005293) };
\end{axis}
\end{tikzpicture}
\end{minipage}}
\scalebox{.54}{
\begin{minipage}[t]{.5\textwidth}
  \centering
  \begin{tikzpicture}
\begin{axis}[xmax=4.5,
xlabel={\% query},
xtick={0,1,2,3,4},
xticklabels={0.1,0.2,0.3,0.4,0.5}]
\addplot coordinates {(0, -0.003635) (1, -0.007203) (2, -0.010715) (3, -0.013969) (4, -0.017231) };
\addplot coordinates {(0, -0.002193) (1, -0.004370) (2, -0.006483) (3, -0.008459) (4, -0.010492) };
\addplot coordinates {(0, -0.001801) (1, -0.003605) (2, -0.005350) (3, -0.006995) (4, -0.008672) };
\end{axis}
\end{tikzpicture}
\end{minipage}}

\textbf{20 Newsgrp}

\scalebox{.44}{
\begin{minipage}[t]{.8\textwidth}
  \centering
  \begin{tikzpicture}
\begin{axis}[xmax=5.5,
xlabel={\% evidence},
xtick={0,1,2,3,4,5},
xticklabels={0.0,0.1,0.2,0.3,0.4,0.5},
ylabel={Log-likelihood}]
\addplot coordinates {(0, -0.049063) (1, -0.047770) (2, -0.047752) (3, -0.047151) (4, -0.046796) (5, -0.046739) };
\addplot coordinates {(0, -0.030130) (1, -0.028450) (2, -0.028187) (3, -0.027919) (4, -0.027637) (5, -0.027790) };
\addplot coordinates {(0, -0.028937) (1, -0.027887) (2, -0.027755) (3, -0.027569) (4, -0.027289) (5, -0.027460) };
\end{axis}
\end{tikzpicture}
\end{minipage}}
\scalebox{.44}{
\begin{minipage}[t]{.5\textwidth}
  \centering
  \begin{tikzpicture}
\begin{axis}[xmax=4.5,
xlabel={\% query},
xtick={0,1,2,3,4},
xticklabels={0.1,0.2,0.3,0.4,0.5}]
\addplot coordinates {(0, -0.015717) (1, -0.031431) (2, -0.047057) (3, -0.062280) (4, -0.078183) };
\addplot coordinates {(0, -0.009308) (1, -0.018570) (2, -0.027864) (3, -0.036957) (4, -0.046257) };
\addplot coordinates {(0, -0.009180) (1, -0.018310) (2, -0.027491) (3, -0.036545) (4, -0.045701) };
\end{axis}
\end{tikzpicture}
\end{minipage}}
\caption{Average conditional log-likelihood normalised by the number of query variables, where blue dots denote LearnSPN, red squares denote LearnCSPN with $1\%$ missing data and  brown dots denote LearnCSPN with $5\%$ missing data. Left plot fixes the fraction of evidence variables
at 30\% and varies the fraction of query variables; Right plot fixes query variables at 30\% and varies evidence.}\label{fig_plot}
\end{figure}
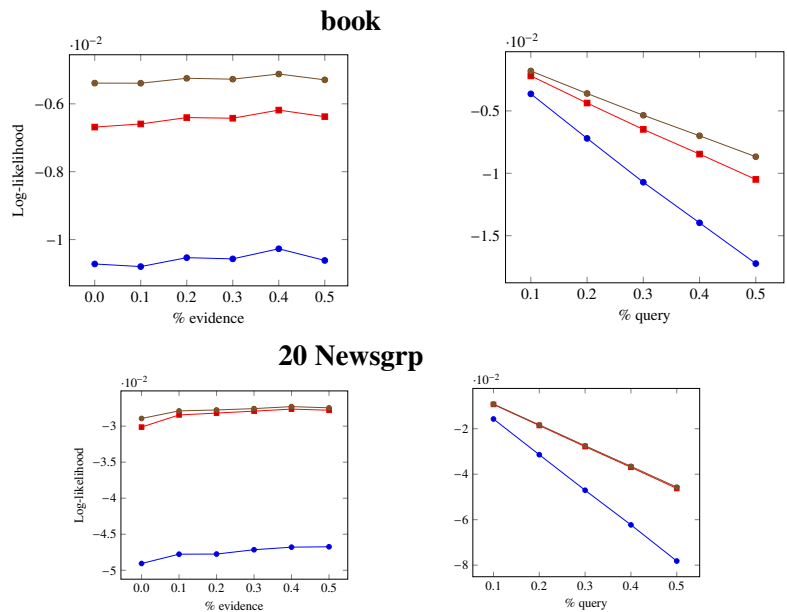


{The experimental results reveal two majors advantages:
\begin{itemize}
    \item by allowing weights on sum nodes to vary in a credal set (specified by interval or extreme points), we allow for a model that is more compact and has  better accuracy. Indeed, a CSPN can be seen as a set of compatible SPNs, each SPN being drawn from a completion of the missing values.
    \item as we  see in Table \ref{tab_result_learning}, even the worst case scenario gives better accuracy on 13 datasets. This means that one can learn a better SPN from missing values by first learning  a CSPN and then extracting a compatible SPN (perhaps even a random compatible SPN).   
\end{itemize}


\section{Conclusions}

In this work, we were motivated by the need for a principled solution for learning probabilistic models in the presence of missing information. Rather than making a closed-world/complete knowledge assumption that might ignore or discard the corresponding entries, which is  problematic, we argued for learning models that natively handle imprecise knowledge. Consequently, by leveraging the credal network semantics and the recent  paradigm of tractable learning, 
we proposed and developed a learning regime for credalSPNs. Our empirical results show that the regime performs very strongly on our datasets, not only in terms of the log-likelihood and the size of the resulting network, but also in terms of prediction accuracy. 

Directions for future work include pushing the applicability of LearnCSPNs on a large-scale text data applications, while possibly considering relational extensions \citep{DBLP:conf/aaai/NathD15} to leverage extracted and mined relations that may be obtained by natural language processing techniques. For the immediate future, we are interested in optimising the algorithm for minimising the discarded convex regions when extracting extreme points. 






\bibliographystyle{plainnat}      
\bibliography{biblio}   

\clearpage

\section{Appendix}
\label{sec:appendix}

\subsection{SPN Validity \& Example} 
\label{sub:spn_validity}

An SPN is valid if it satisfies the two following conditions:
\begin{enumerate}
    \item an SPN is \textit{complete}: if and only if all children of the same sum node have the same scope.
    \item an SPN is \textit{consistent}: if and only if no variable appears negated in one child of a product node and non-negated in another.
\end{enumerate}

As mentioned, SPNs are essentially based on the notion of network polynomials \citep{darwiche2003differential}; an SPN can be expanded to a network polynomial as  seen in Example \ref{ex_spn}. Intuitively, if an SPN is valid, its expansion includes all monomials present in its network polynomial.

\begin{example}\label{ex_spn}
Figure \ref{fig_SPN} illustrates an example of a Sum-Product network over two Boolean variables. 
\begin{figure}[!ht]
    \centering
    \begin{tikzpicture}[node distance=0.5cm, >=stealth']
        \node [events] (S1) {+};
        \node [events, below left = of S1] (P1) {x};
        \node [events, below right = of S1] (P2) {x};

        \node [events, below left = of P1] (S3) {+};
        \node [node distance=.9cm, events, right = of S3] (S2) {+};
        
        \node [events, below right = of P2] (S4) {+};

        \node [below = of S3] (V1) {X1};
        \node [node distance=.9cm, right = of V1] (V2) {$\overline{X1}$};
        \node [node distance=.6cm, right = of V2] (V3) {X2};
        \node [node distance=.6cm, right = of V3] (V4) {$\overline{X2}$};
        
        \draw [->] (S1) -- (P1) node[midway,left=2mm,above]{.7};
        \draw [->] (S1) -- (P2) node[midway,right=2mm,above]{.3};
        
         \draw [->] (P1) -- (S3);
        \draw [->] (P1) -- (S4);
        
         \draw [->] (P2) -- (S4) ;
         \draw [->] (P2) -- (S2) ;

        \draw [->] (S2) -- (V1) node[near start,sloped,above]{.2};
        \draw [->] (S2) -- (V2) node[near start,above,right]{.8};
        \draw [->] (S3) -- (V1) node[near start,above,left]{.4};
        \draw [->] (S3) -- (V2) node[near start,above,sloped]{.6};
        \draw [->] (S4) -- (V3) node[near start,above,left]{.9};
        \draw [->] (S4) -- (V4) node[midway,right=2mm,above]{.1};

    \end{tikzpicture}
    \caption{Sum-Product network example}
    \label{fig_SPN}
\end{figure}
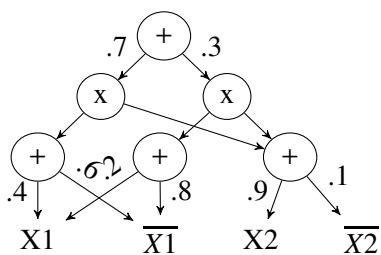

The function $S$ of the indicators is written as 
\begin{equation*}
\begin{array}{cc}
    S(x_1,x_2,\overline{x_1}, \overline{x_2}) = & .7\left( .4x_1 + .6\overline{x_1}\right)\left( .9x_2 + .1\overline{x_2}\right) + \\
    & .3\left( .2x_1 + .8\overline{x_1}\right)\left( .9x_2 + .1\overline{x_2}\right) 
\end{array}
\end{equation*}
Its expansion into a network polynomial is given by
\begin{equation*}
\begin{array}{c}
     (.7*.4*.9+.3*.2*.9)x_1x_2 + (.7*.4*.1+.3*.2*.1)x_1\overline{x_2} + \\
     (.7*.6*.9+.3*.8*.9)\overline{x_1}x_2 + (.7*.6*.1+.3*.8*.1)\overline{x_1}\overline{x_2} \\
     = .306x_1x_2 + .034x_1\overline{x_2} + .594\overline{x_1}x_2 + .066\overline{x_1}\overline{x_2}
\end{array}
\end{equation*}

It is worth noting also that the SPN in Figure \ref{fig_SPN} is valid.
\end{example}

\subsection{Credal SPN Example}

\begin{example}
We illustrate an example of a CSPN in Figure \ref{fig_CSPN}. In this example, along the lines of the discussions in \citep{DBLP:conf/isipta/MauaCCC17}, the weights are given as a convex set of different types. 
That is, weights $w_1,w_2$ are probability intervals, as are weights $w_5,w_6$. In contrast, $w_3,w_4$ and $w_7,w_8$ are sets of extreme points (\textit{i.e.} described as the convex hull by the set of extreme points). We have to ensure that the constraints associated with probability intervals realise a normalised distribution. 

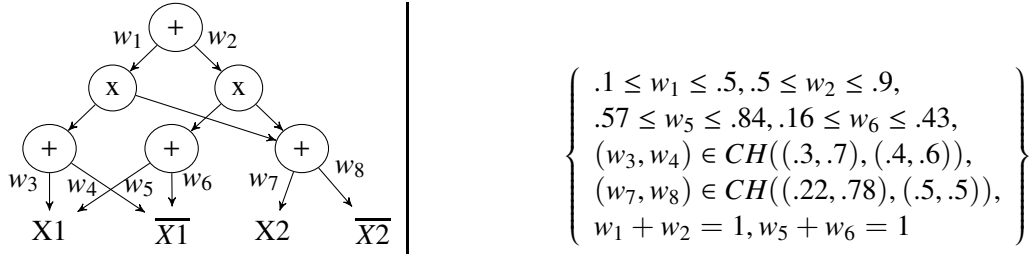
\begin{figure}[!ht]
    \centering
    
    \begin{minipage}[t]{.45\linewidth}
    \begin{tikzpicture}[node distance=0.5cm, >=stealth']
        \node [events] (S1) {+};
        \node [events, below left = of S1] (P1) {x};
        \node [events, below right = of S1] (P2) {x};

        \node [events, below left = of P1] (S3) {+};
        \node [node distance=.9cm, events, right = of S3] (S2) {+};
        
        \node [events, below right = of P2] (S4) {+};

        \node [below = of S3] (V1) {X1};
        \node [node distance=.9cm, right = of V1] (V2) {$\overline{X1}$};
        \node [node distance=.6cm, right = of V2] (V3) {X2};
        \node [node distance=.6cm, right = of V3] (V4) {$\overline{X2}$};
        
        \draw [->] (S1) -- (P1) node[midway,left=2mm,above]{$w_1$};
        \draw [->] (S1) -- (P2) node[midway,right=2mm,above]{$w_2$};
        
         \draw [->] (P1) -- (S3);
        \draw [->] (P1) -- (S4);
        
         \draw [->] (P2) -- (S4) ;
         \draw [->] (P2) -- (S2) ;

        \draw [->] (S2) -- (V1) node[midway,below,right]{$w_5$};
        \draw [->] (S2) -- (V2) node[near start,above,right]{$w_6$};
        \draw [->] (S3) -- (V1) node[near start,above,left]{$w_3$};
        \draw [->] (S3) -- (V2) node[below,left,midway]{$w_4$};
        \draw [->] (S4) -- (V3) node[near start,above,left]{$w_7$};
        \draw [->] (S4) -- (V4) node[midway,right=2mm,above]{$w_8$};
    \end{tikzpicture}
     \rulesep
\end{minipage}
\begin{minipage}[t]{.45\linewidth}
\vspace{-2.5cm}
     $\left\lbrace \begin{array}{l}
      .1\leq w_1\leq .5, .5\leq w_2 \leq .9,  \\
      .57\leq w_5\leq .84, .16\leq w_6 \leq .43,  \\
       (w_3,w_4) \in CH((.3,.7), (.4,.6)), \\
       (w_7,w_8) \in CH((.22,.78), (.5,.5)), \\
       w_1+w_2=1, w_5+w_6=1
    \end{array}\right\rbrace$
  \end{minipage}
  
    \caption{Credal Sum-Product network example}
    \label{fig_CSPN}
\end{figure}

\end{example}


\subsection{Inference in Credal SPNs}

As  mentioned, \citep{DBLP:conf/isipta/MauaCCC17} 
provide ways to compute the minimum and maximum values for a given configuration $\lambda$  of the indicator variables, as well as  ways to compute conditional expectations, which we describe in the sequel. In particular, they propose to use a linear program solver to solve (maximise or minimise) and find the best weight to propagate. We explain later that it is easier to learn intervals of probability degrees when clustering. But we also show here that inference using extreme points can be more efficient. Thus, we recall the definition of interval-based probability distribution and an inference scheme using a linear program. The second part of this section tackles the extraction of extreme points from the interval-based probability distribution. 

\subsubsection{Inference Using a Linear Program Solver}


Interval-based probability distributions (IPD for short) are a very natural and common way to specify imprecise and incomplete  information. In an IPD $\IP$,  every interpretation $\omega_i\in\Omega$ is associated with a probability interval $\IP(\omega_i)=[l_i,u_i]$ where $l_i$ (resp. $u_i$) denotes the lower (resp. upper) bound of the probability of $\omega_i$. 
\begin{definition}[Interval-based probability distribution]
Let $\Omega$ be the set of possible worlds. An interval-based probability distribution $\IP$ is a function that maps every interpretation $\omega_i\in\Omega$ to a closed interval $[l_i,u_i]\subseteq [0,1]$.
\end{definition}

Given a credal set, if the weights are depicted as extreme points, we can obtain 
an IPD by taking into account for each value the minimum degree of the extreme points (for the lower bound) and the maximum degree of the extreme points (for the upper bound). The result, as any other interval-based probability distribution, should satisfy the following constraints in order to ensure that the underlying credal set is not empty and every lower/upper probability bound is reachable.

\begin{equation}\label{eq_sumint}
    \sum_{\omega_i\in\Omega}l_i \leq 1 \leq \sum_{\omega_i\in\Omega}u_i
\end{equation}
\begin{equation}\label{eq_infsupint}
    \forall\omega_i\in\Omega,\ l_i+ \sum_{\omega_{j\neq i}\in\Omega} u_j \geq  1\ \mbox{ and } u_i+ \sum_{\omega_{j\neq i}\in\Omega} l_j \leq 1
\end{equation}

Let us illustrate an interval-based probability distribution in the context of a credal SPN. 

\begin{example}
Let $S$ be a sum node with three children $D_S=\{cl_1,cl_2,cl_3\}$. Table \ref{tb_interval} provides an example of interval-based probability distribution.

\begin{table}[!htb]\centering
\caption{Example of an interval-based probability distribution over a set of $3$ clusters.}\label{tb_interval}
\begin{tabular}{c|c}
	\hline
	$S$ & $\IP(S)$ \\ 
	\hline
	$cl_1$ & [ 0, .4 ] \\
	$cl_2$ & [.1, .55] \\
	$cl_3$ & [.1, .65] 
\end{tabular} 
\end{table}

This imprecise probability distribution satisfies the two conditions of Equations \eqref{eq_sumint} and \eqref{eq_infsupint}.
\end{example}

From intervals, as depicted in Table \ref{tb_interval}, it is easy to compute, using a linear program solver, $\min_w S_w(\lambda)$ and $\max_w S_w (\lambda)$ as seen in the following example. 

\begin{example}[Continued]
Given Table \ref{tb_interval}, let us consider that the weights associated to the children of the clusters have been optimised to .4 for cluster 1, .5 for cluster 2 and .1 for cluster 3. Thus, we encode the linear program as follow:

\lstset{language=C}
\begin{lstlisting}  % Start your code-block

Maximize
	value: .4*cl1 + .5*cl2 + .1*cl3

Subject To
	c0:	cl1 + cl2 + cl3 = 1

Bounds
	0 <= cl1 <= 0.4
	0.1 <= cl2 <= 0.55
	0.1 <= cl3 <= 0.65

End

\end{lstlisting}

\end{example}

\subsubsection{Reasoning About Extreme Points} 
\label{sub:reasoning_about_extreme_points}

We present a simple algorithm below that can extract a set of extreme points. This set is not the complete set as the number of extreme points for $n$ values can reach $n!$ extreme points \citep{DBLP:journals/ijar/Wallner07}. 

\begin{algorithm} \footnotesize  
\caption{Extract a set of extreme points for a sum node with $n$ children (Function called \textbf{Extract})}
\label{algo_extract}
\begin{algorithmic}[1]
\Require $llambda$ \Comment{Initialize as the sum of the lower weights associated to each edge of the sum node}
\Require $PT$ be an array representing an extreme point \Comment{Initialize as the lower weights associated to each edges of the sum node}
\Require $expl$  the list of index of weight of $PT$ already fixed
\Ensure {A set of extreme points $ext\_points$}
	\For {index $i$ in $1,\dots n$}
		\If {$i$ is not in $expl$} \Comment{$PT[i]$ has not been fixed}
		    \If{$llambda <= (Wupp[i]-Wlow[i])$}
			    \State {$temp = PT[i]$}
			    \State {$PT[i] += llambda$}
			    
			    \Comment{A extreme point $PT$ is found}
			    \If{$PT$ not in $ext\_points$}
			        \State {$ext\_points.add(PT)$}
			    \EndIf 
			    \State{$PT[i] = temp$}
		    \Else 
			    \State {$temp = PT[i]$}
			    \State {$PT[i] = Wupp[i]$}
			    \State{$expl.add(i)$}
			    \State{\textbf{Extract}($PT, llambda - Wupp[i]+Wlow[i]$ ,$expl$) }
			    
			    \State{$PT[i] = temp$}
		   \EndIf
        \EndIf
    \EndFor

\end{algorithmic}
\end{algorithm}

An example  will be given later but most significantly,  inference no longer requires a call to a linear program solver to compute the maximum or minimum value. It suffices to browse the set of extreme points and find the point which gives the maximum or minimum value. While the complexity of inference stays the same, it significantly simplifies the computational efficiency of the regime as a whole, as the extraction is made during learning. 


\subsection{Examples on Weights in SPNs}

\begin{example}
Let us consider  Figure \ref{fig_bn_data} which depicts a small credal network over 3 variables and a dataset where each row is an instance that  represents a record at time $t$ of the variables (or perhaps the record from an expert about the variables). Some of these instances are incomplete, denoted with a `?.' 
\\
\begin{figure}[!ht]
\centering
\begin{tikzpicture}[node distance=1cm, >=stealth']
\node [events] (A) {A};
\node [events, right = of A] (S) {B};
\node [events, below right = of A] (T) {C};

\draw [->] (A) -- (T);
\draw [->] (S) -- (T);

\end{tikzpicture}
\\[3ex] 
\begin{tabular}{c|ccc}
Inst & A & B & C \\
\hline
1 & 1 & 0 & 1\\
2 & 0 & 0 & 1 \\
3 & 1 & ? & 1 \\
4 & ? & 1 & 0 \\
\end{tabular}\caption{Credal network structure and a corresponding dataset}\label{fig_bn_data}
\end{figure}

\label{ex_credal_net_miss_inf}

Since the structure is given, we can easily learn the imprecise probability distributions associated to nodes $A,B$ and $C$. The lower bound of an interpretation is given by the proportion of \textbf{known instances} compatible with the interpretation. For example, $\underline{\IP}(A=1) = 1/2$. As for the upper bound, the degree is given by the proportion of instances that \textit{are or could} potentially (owing to the missing information) be compatible with the interpretation. For example, $\overline{\IP}(A=1) = 3/4$.
\end{example}


Likewise, the lower (respectively, upper) bound for a cluster is given by the number of complete instances (respectively, the number of incomplete instances).  Unfortunately, this method does not ensure that all  interval bounds are reachable. Let us illustrate such an example.

\begin{example}
Let us consider a partition of 80 instances, which contains 30 incomplete instances (and so, 50 complete). Let us say that the partition has been clustered into 5 clusters. One extreme case could be that the first cluster contains 10 complete instances and 30 incomplete. The other clusters split the remaining complete instances. This means that the lower and upper bound of the interval associated with the second, third and forth cluster are the same. Only the first cluster has different bounds, indeed $\IP(c1) = [.125,.5]$. Yet, the probability of the three other clusters add up to $.5$ which implies that $.125$ cannot be reached. 
\end{example}

A simple and effective way to deal with this problem is to readjust the bound to be able to satisfy the conditions of a well-defined interval-based probability distribution (as given by Equations \eqref{eq_sumint} and \eqref{eq_infsupint}). 
More precisely, from the learned interval, we extract a set of extreme points that defines a convex set included in the actual convex set determined by the intervals (cf. Example \ref{ex_convex}). Note that the extraction process  does not extract all  the extreme points. This is purposeful and is motivated by  two reasons: 
\begin{itemize}
    \item the number of extreme points can be extremely high (at most $n!$) \citep{DBLP:journals/ijar/Wallner07}; 
    \item between two extreme points, the change in the probability degrees might be small which would not gravely affect the overall log-likelihood or conditional probability.
\end{itemize}

\begin{example}\label{ex_convex}
In this example, we illustrate the convex set covered in the process of extraction of extreme points. Note that in case of a small amount of clusters, the two reasons discussed above do not manifest so strongly. Let us consider a sum node with 3 clusters with the following interval weights: 

    \begin{itemize}
        \item c1 [.2,.4]
        \item c2 [.3,.55]
        \item c3 [.2,.48]
    \end{itemize}
There are 6 extreme points as shown below:
\begin{multicols}{2}
     \begin{itemize}
        \item P1 [0.4, 0.4, 0.2]
        \item P2 [0.4, 0.3, 0.3]
        \item P3 [0.25, 0.55, 0.2]
        \item P4 [0.2, 0.55, 0.25]
        \item P5 [0.22, 0.3, 0.48]
        \item P6 [0.2, 0.32, 0.48]
    \end{itemize}
    \end{multicols}
    
\tdplotsetmaincoords{60}{120}

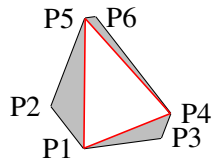
\begin{figure}[!ht]
    \centering
    \begin{tikzpicture}
    [scale=5,
    tdplot_main_coords,
    axis/.style={->,blue,thick},
    vector/.style={thick},
    vector guide/.style={dashed,red,thick}]

\coordinate (P1) at (0.4, 0.4, 0.2);
\coordinate (P2) at (0.4, 0.3, 0.3);
\coordinate (P3) at (0.25, 0.55, 0.2);
\coordinate (P4) at (0.2, 0.55, 0.25);
\coordinate (P5) at (0.22, 0.3, 0.48);
\coordinate (P6) at (0.2, 0.32, 0.48);

\draw[vector] (P2) -- (P1) node [left]  {P1};
\draw[vector] (P1) -- (P3) node [below, right]  {P3};
\draw[vector] (P3) -- (P4) node [right]  {P4};
\draw[vector] (P4) -- (P6) node [right]  {P6};
\draw[vector] (P5) -- (P2) node [left]  {P2};
\draw[vector] (P6) -- (P5) node [left]  {P5};

     \path [draw=none, fill=lightgray, line width=1mm]
       (P1) -- (P3) -- (P4) -- cycle;
       
       \path [draw=none, fill=lightgray, line width=1mm]
       (P4) -- (P6) -- (P5) -- cycle;
       
       \path [draw=none, fill=lightgray, line width=1mm]
       (P5) -- (P2) -- (P1) -- cycle;
       
\draw[vector,red,semithick] (P5) -- (P1);
\draw[vector,red,semithick] (P1) -- (P4);
\draw[vector,red,semithick] (P4) -- (P5);

\end{tikzpicture}
    \caption{Convex set of extreme points extracted}
    \label{fig_convex}
\end{figure}

In this example, it is easy to extract all extreme points since it only involves $3$ clusters. However, on a huge dataset we  might obtain (say) a hundred  clusters. In this case, the number of extreme points can reach 100!. To avoid the computation of all the extreme points and also avoid having to consider all of them in the computation of the log-likelihood, we restrict ourselves to a subset of the extreme points which defines a sub convex set as illustrated in Figure \ref{fig_convex}.  In our example, this results in considering $3$ extreme points over the 6 possible ones, and the grey areas are the convex sets of degrees that are ignored.  
\end{example}

Nonetheless, the data structure used to store the extreme points can be easily optimised for also storing 
the maximum weight which can  augment the efficiency of inference. 
Analogously, the set of extreme points we select can be improved to minimise the  convex regions that are ignored. 

\subsection{Clustering Algorithm}
Algorithm \ref{algo_cluster} given below summarises how the set of clusters is build, this is done by computing the log-likelihood measures defined in Section \ref{sec_learn}.
\begin{algorithm}[t] \footnotesize
\caption{Find best clusters for instance $inst$}
\label{algo_cluster}
\begin{algorithmic}[1]
\Require $inst$, set of clusters $nbcs$
\Ensure {A set of clusters $clusters\_for\_inst$}
	\State {Let $bestCLL$ be the best current log likelihood of $inst$ with a cluster}
	\State {$bestCLL = newClusterPenalizedLL$}
	\State {Let $bestCLL\_worstcasescenario$ be the best current log likelihood of the worst case scenario of $inst$ with a cluster}
	\State {$bestCLL\_worstcasescenario = newClusterPenalizedLL$}
	\State {Let $clusters\_for\_inst$ initialized to an empty set}
	\State {$Have\_found\_cluster = false$}
	\ForEach {$cluster$}{$nbcs$}
		\State {$cll \gets$ log likelihood of $inst$ in $cluster$}
		\State {$worstcase \gets$ log likelihood of the worst case scenario of $inst$ in $cluster$}
		\If {$cll > bestCLL$}
			\State {$Have\_found\_cluster = true$}
			\State {$bestCLL = cll$}
			\State {$bestCLL\_worstcasescenario = worstcase$}
			\ForEach {$clus$}{$clusters\_for\_inst$}
				\If {$clus.LL < bestCLL\_worstcasescenario$}
					\State {remove $clus$ from $clusters\_for\_inst$}
				\EndIf
			\EndFor
			\State {$clusters\_for\_inst.add(cluster)$}
		\ElsIf {$cll > bestCLL\_worstcasescenario$}
			\State {$clusters\_for\_inst.add(cluster)$}	
		\EndIf
	\EndFor
	\If {\textbf{not} $Have\_found\_cluster$}
		\State {$c = new Cluster$}
		\State {$nbcs.add(c)$}
		\State {$clusters\_for\_inst.add(c)$}
	\EndIf

\end{algorithmic}
\end{algorithm}

\subsection{Complexity Remarks} 
The previous result stated that LearnCSPN returns a valid CSPN. We can now be precise in our analysis of the learned CSPN,  and its structure. In fact, if we remove leaves then the CSPN is a tree-shaped network. This is formally stated in the next proposition.

\begin{proposition}\label{prop_tree}
LearnCSPN learns a CSPN where internal nodes have at most one parent.
\end{proposition}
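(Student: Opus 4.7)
The plan is to prove the proposition by structural induction on the recursive calls of LearnCSPN, showing that the algorithm's construction never shares internal subtrees between different parent nodes. The key observation is that the algorithm is purely recursive with no explicit sharing step: every invocation of LearnCSPN that returns an internal node (either a product node in the independence branch or a sum node in the clustering branch) constructs that node afresh and wires it as the root of the returned subtree. Hence the only way an internal node could acquire a second parent would be if some subtree were reused across two different sibling branches, which the pseudocode in Algorithm \ref{algo_complete} never does.

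First I would formalize the induction on the depth $d$ of the recursion tree. At $d=0$, the statement is vacuous because the root has no parent. For the inductive step, consider a call LearnCSPN$(T,V)$ with $|V|>1$. It falls into one of two cases. In the independence case, the algorithm returns $\prod_j \text{LearnCSPN}(T,V_j)$ where the sets $V_j$ partition $V$, so a new product node is created with exactly one outgoing edge to each child subtree root; by the induction hypothesis applied to each $(T,V_j)$, all internal nodes inside those subtrees have at most one parent already within their subtree. Since the $V_j$ are disjoint, these subtrees are built from recursive calls whose scopes never overlap, so they cannot produce the same internal node (internal nodes always have non-singleton scopes, whereas only leaves are scope-singleton distributions). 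The clustering case is handled identically: the returned sum node is freshly created, each cluster subtree is built from a disjoint subset $T_i$ of instances on the same scope $V$, and within each cluster the recursion again creates fresh nodes.

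The main subtlety — and what I would flag as the chief obstacle — is making precise why two sibling recursive calls cannot accidentally produce the same internal node. In the product-node case this is immediate from the disjointness of scopes combined with the invariant that every internal node's scope equals the variable set $V$ passed into the call that created it. In the sum-node case the scopes are identical, so disjointness must come from elsewhere; here I would argue that since LearnCSPN treats each recursive call as an independent construction (no memoization, no caching table, no look-up of previously built subtrees), the node objects created inside one branch are distinct Python/pseudocode objects from those in the other branch, even when the resulting distributions would coincide. This is a property of the algorithm as written, not of the semantics it implements.

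Finally I would emphasize that the proposition explicitly speaks of \emph{internal} nodes, so the indicator leaves $x_i, \overline{x}_i$ — which may indeed be shared among many sum nodes when the base case $|V|=1$ is reached on the same variable from different partitions — do not falsify the claim. Concluding, every internal node is created by exactly one recursive call and thus has exactly one parent (or none, for the root), so the CSPN minus its leaves is a tree. This sets up nicely for the later complexity bounds, since tree-shaped structure makes the size of the network linear in the number of recursive calls.
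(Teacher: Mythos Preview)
Your proposal is correct and follows essentially the same approach as the paper: both argue that each internal node arises from a single recursive call on a unique parent partition, so no internal node can acquire more than one parent. The paper gives only a one-sentence sketch of this observation, whereas you spell it out as a full structural induction with separate handling of the product and sum branches; your added remarks on scope disjointness and the absence of memoization are sound elaborations of the same underlying idea.
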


This proposition allows us to leverage a strong result on the polynomial time computational complexity for conditional expectations, as proved in Mau\'a et al. \citep{DBLP:conf/isipta/MauaCCC17}: 

\begin{theorem}[\citep{DBLP:conf/isipta/MauaCCC17}]\label{th_cozm}
Computing lower/upper conditional expectations of a variable in CSPNs takes at most
polynomial time when each internal node has at most one parent.
\end{theorem}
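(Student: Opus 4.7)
The plan is to exploit the tree-shaped structure guaranteed by Proposition~\ref{prop_tree}: once we remove the leaves, each internal node has a unique parent, and therefore the credal choices at distinct sum nodes act on disjoint subnetworks. I would build the proof in four steps.

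First, I would rewrite the lower conditional expectation as a fractional program. Writing $\underline{E}[f\mid e] = \min_{w\in K} N(w)/D(w)$, where $N(w) = S_w(f\cdot I_e)$ and $D(w) = S_w(I_e)$, the numerator and denominator are both evaluated by the usual bottom-up pass through the CSPN, differing only in the scalar values assigned at the leaves. Hence both are multilinear in the weights $w$, and the whole task reduces to optimising a ratio of two polynomials of the same combinatorial shape.

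Second, I would handle the ratio via Dinkelbach's parametric method (equivalently, a Charnes--Cooper style reformulation at the root): $\underline{E}[f\mid e] = \lambda^*$ where $\lambda^*$ is the unique root of $h(\lambda) := \min_{w\in K}[N(w) - \lambda D(w)]$. For every fixed $\lambda$, the function $N(w)-\lambda D(w)$ is itself a sum-product expression on the CSPN with scalar leaves, so evaluating $h(\lambda)$ reduces to a single bottom-up optimisation pass over the network.

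Third, I would argue that this bottom-up pass runs in polynomial time. By Proposition~\ref{prop_tree}, the weight vector $w_i$ at a sum node $i$ appears only in the subexpression rooted at $i$; thus the optimal local assignment depends only on the (already computed) values returned by $i$'s children and on the credal set $K_i$, and can be obtained by solving a single linear program over $K_i$. Under the assumption that each credal set admits a polynomial-size representation---intervals, or polynomially many extreme points, as produced by our LearnCSPN---both the per-node LP and Dinkelbach's outer iteration require polynomially many arithmetic operations, yielding an overall polynomial time bound. The upper expectation follows by replacing $\min$ with $\max$ throughout.

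The main obstacle will be step three: because $N$ and $D$ share the same weight variables, one must verify that fixing $w_j$ for $j\neq i$ does not implicitly constrain the optimal $w_i$ through some shared subterm. This is exactly where the tree-shaped property of Proposition~\ref{prop_tree} is essential, and I would formalise it by structural induction on the CSPN, proving that at each internal node the local LP returns the global optimum of $N-\lambda D$ restricted to the subnetwork rooted at that node. Without the tree property, the weights at a node would in general appear in several branches of the shared DAG, coupling the local choices and breaking the decomposition---which is in line with the known hardness of CSPN inference on general DAGs.
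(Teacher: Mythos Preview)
The paper does not actually prove Theorem~\ref{th_cozm}; it merely quotes the result from Mau\'a et al.~\citep{DBLP:conf/isipta/MauaCCC17} and then combines it with Proposition~\ref{prop_tree} to obtain the subsequent corollary. There is therefore no in-paper proof to compare your attempt against.

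That said, two points in your proposal are worth flagging. First, you repeatedly invoke Proposition~\ref{prop_tree} as the source of the tree structure, but that proposition is about the output of LearnCSPN, whereas the tree shape here is simply the \emph{hypothesis} of the theorem (``each internal node has at most one parent''); your argument should appeal to that hypothesis directly, since Theorem~\ref{th_cozm} is a general statement about any CSPN with this property, not only those produced by one particular learner. Second, your step~2 asserts that for fixed $\lambda$ the function $N(w)-\lambda D(w)$ is ``itself a sum-product expression on the CSPN with scalar leaves''. This is not literally true: $N$ and $D$ are two separate bottom-up evaluations of the same network with different leaf values, and at a product node $i$ one has $N_i-\lambda D_i=\prod_j N_j-\lambda\prod_j D_j$, which is not a product of child quantities of the form $N_j-\lambda D_j$. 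The standard fix---and essentially the device used in \citep{DBLP:conf/isipta/MauaCCC17}---is to propagate the \emph{pair} $(N_j,D_j)$ and exploit decomposability so that only one child $j^\star$ of a product node carries the target variable; then $N_i-\lambda D_i=(N_{j^\star}-\lambda D_{j^\star})\cdot\prod_{j\neq j^\star} D_j$, and the single-parent hypothesis lets you optimise the two factors over disjoint weight blocks (tracking both a min and a max of the nonnegative factor to handle the sign of the first). Your step~3 would go through once this is made explicit, but as written the decomposition at product nodes is glossed over and the ``single bottom-up pass'' claim does not follow from what you have said.
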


From Proposition \ref{prop_tree} and Theorem \ref{th_cozm}, we can infer the next corollary.

\begin{corollary}
Computing lower and upper conditional expectations of a variable in CPSNs learned using LearnCSPN takes at most polynomial time.
\end{corollary}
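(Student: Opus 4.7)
The plan is to argue that this corollary is essentially a one-line composition of the two results already established in the paper, so my proof proposal will simply make the logical chain explicit and verify that the hypotheses of Theorem \ref{th_cozm} truly apply to the output of LearnCSPN.

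First, I would invoke Proposition \ref{prop_tree}, which guarantees that for any training set $T$ and variable set $V$, the CSPN returned by LearnCSPN$(T,V)$ has the structural property that every internal node has at most one parent. This is the \emph{only} structural hypothesis that Theorem \ref{th_cozm} requires, so the next step is immediate: plug the output of LearnCSPN into Theorem \ref{th_cozm} to conclude that computing lower and upper conditional expectations of any variable runs in time polynomial in the size of the learned CSPN.

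A small subtlety worth addressing explicitly is what ``polynomial time'' is measured against. Theorem \ref{th_cozm} bounds the inference time as a polynomial in the size of the CSPN (number of nodes, edges, and the description of the credal sets at sum nodes). Since LearnCSPN produces a finite CSPN whose size is determined by the recursion on $|T|$ and $|V|$, the inference cost is polynomial in the size of the learned model, which is the standard and intended reading of the statement. I would also note, for completeness, that the extreme-point extraction described in Section \ref{subsec_weight} keeps the credal-set representation at each sum node of size polynomial in the number of clusters (since we deliberately restrict to a sub-convex subset rather than enumerating all up to $n!$ extreme points), so the per-node credal description does not blow up the input size for the inference procedure.

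The main ``obstacle,'' if any, is therefore not mathematical but notational: one must confirm that the CSPN model class assumed by Mauá et al.\ in Theorem \ref{th_cozm} coincides with the class produced by LearnCSPN. Since Section \ref{sec_cred_spn} explicitly adopts the CSPN definition of \citep{DBLP:conf/isipta/MauaCCC17}, and Theorems \ref{th_validity} and Proposition \ref{prop_tree} certify that the learned object is a valid CSPN of tree-shaped internal structure, this alignment is immediate. The corollary then follows with no further computation.
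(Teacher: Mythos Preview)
Your proposal is correct and follows essentially the same approach as the paper, which simply states that the corollary is inferred directly from Proposition \ref{prop_tree} and Theorem \ref{th_cozm}. Your additional remarks on what ``polynomial'' is measured against and on the bounded size of the extracted extreme-point representation are reasonable elaborations, but the core argument is identical to the paper's one-line derivation.
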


A second result from \citep{DBLP:conf/isipta/MauaCCC17} is also worth noting: 

\begin{theorem}[\citep{DBLP:conf/isipta/MauaCCC17}]
Consider a CSPN $\{S_w : w \in C\}$, where $C$ is the Cartesian product of finitely-generated
polytopes $C_i$, one for each sum node $i$. Computing $\min_w S_w(\lambda)$ and $\max_w S_w(\lambda)$ takes $\mathcal{O}(sL)$ time, where $s$ is the number of nodes and arcs in the shared graphical structure and $L$ is an upper bound on the cost of solving a linear program $\min_{w_i} \sum_j c_{ij} w_{ij}$ subject to $w_i \in C_i$.
\end{theorem}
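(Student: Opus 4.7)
The plan is to establish the bound by exhibiting an explicit algorithm that makes a single bottom-up pass through the shared graphical structure of the CSPN, solving at most one linear program per sum node. Nodes are processed in reverse topological order, and to each node $i$ I assign a pair $(\underline v_i,\overline v_i)$ meant to represent $\min_{w\in C} v_i(w)$ and $\max_{w\in C} v_i(w)$ respectively, where $v_i(w)$ is the value of node $i$ under weight assignment $w$. The final quantities $\min_w S_w(\lambda)$ and $\max_w S_w(\lambda)$ are then read off at the root. Memoization ensures each node is touched only a constant number of times even when the graphical structure is a DAG with shared substructure.

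The recursion itself is short. At every leaf indicator the value is fixed by $\lambda$, so $\underline v_i=\overline v_i=\lambda_i$. At a product node $i$ with children $j_1,\dots,j_m$, non-negativity of values together with the fact that the weights governing distinct children come from disjoint coordinates of the Cartesian product $C$ lets one decouple the optimization, yielding $\underline v_i=\prod_k \underline v_{j_k}$ and $\overline v_i=\prod_k \overline v_{j_k}$. At a sum node $i$ the children's values $\underline v_{j_k}$ and $\overline v_{j_k}$ have already been computed and crucially do not depend on $w_i$, because $w_i$ only appears on the outgoing edges of $i$ and the polytopes are independent. Hence
\[
\underline v_i \;=\; \min_{w_i\in C_i}\sum_{k} w_{ij_k}\,\underline v_{j_k},
\qquad
\overline v_i \;=\; \max_{w_i\in C_i}\sum_{k} w_{ij_k}\,\overline v_{j_k},
\]
each of which is exactly a linear program of the asserted form $\min_{w_i}\sum_k c_{ij_k}w_{ij_k}$ subject to $w_i\in C_i$, solvable in time $L$ by hypothesis.

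Correctness is a structural induction: granting the inductive hypothesis at the children of a node, the displayed formulas and the product rule correctly yield the min and max of $v_i(w)$ over $w\in C$. For complexity, the traversal visits each of the $s$ nodes and edges only a constant number of times, performs $O(1)$ arithmetic per edge at leaves and product nodes, and invokes the LP oracle at most once per sum node at cost $L$; summing gives $\mathcal O(sL)$. The one delicate step that deserves care is the product-node factorization $\min_w\prod_k v_{j_k}(w)=\prod_k \min_w v_{j_k}(w)$: this relies on (i) non-negativity of all propagated values, so that minimizing the product amounts to minimizing each factor separately, and (ii) the Cartesian-product structure of $C$ combined with the CSPN being decomposable, which together guarantee that the sub-CSPNs feeding distinct children of a product node involve disjoint sum-node weight components and can therefore be optimized independently. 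Once this factorization is justified, the sum-node LP step and the overall accounting are routine.
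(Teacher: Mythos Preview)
The paper does not actually prove this theorem: it is quoted from Mau\'a et al.\ and accompanied only by a one-line gloss (``it was due to calling a linear solver on each of the sum nodes, to optimise and propagate values to the root node''). Your proposal is precisely that gloss made rigorous---a single bottom-up pass with memoization, an LP at each sum node, straight multiplication at product nodes, and the $\mathcal O(sL)$ accounting---so your approach coincides with what the paper sketches, while supplying the details the paper omits.

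The one place worth tightening is your justification of the product-node factorization $\min_w\prod_k v_{j_k}(w)=\prod_k\min_w v_{j_k}(w)$. You appeal to decomposability; that is indeed the right structural hypothesis, but note that it concerns \emph{variable} scopes, whereas what you need is that distinct children of a product node share no sum-node \emph{descendants} (so that their weight components are genuinely disjoint coordinates of $C$). Decomposability does deliver this---any node shared by two children of a product node would have scope contained in two disjoint scopes, hence empty---but the paper's own definition of validity uses the weaker \emph{consistency} condition, under which the argument can fail. Since the paper only applies the theorem to the output of LearnCSPN, which by their Proposition~\ref{prop_tree} has tree-shaped internal structure anyway, the distinction does not matter for the downstream corollaries, but it is worth being explicit about which assumption is doing the work in your argument.
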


Intuitively, this result says that the computational complexity associated with upper log-likelihoods in a tree-shaped CSPN remains linear. In the case of that work, it was due to calling a linear solver on each of the sum nodes, to optimise and propagate values to the root node. And this is all the more true when dealing with extreme points, and so we get that  the computational complexity for log-likelihoods is linear in the number of extreme points multiplied by the number of nodes.

\begin{corollary}
Computing upper/lower log-likelihood of evidence takes $\mathcal{O}(n*N)$ time where $n$ is the number of nodes and $N$ is the maximum number of extreme points in the learned CSPN.
\end{corollary}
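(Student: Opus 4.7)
The plan is to instantiate the immediately preceding theorem of Mau\'a et al.\ for the specific family of CSPNs that LearnCSPN produces, and then show that the per-node linear-program cost $L$ collapses to $\mathcal{O}(N)$ in this setting. Concretely, the preceding theorem bounds the cost of computing $\min_w S_w(\lambda)$ and $\max_w S_w(\lambda)$ by $\mathcal{O}(sL)$, where $s$ counts nodes and arcs of the shared structure and $L$ is the cost of a single LP call $\min_{w_i} \sum_j c_{ij} w_{ij}$ subject to $w_i \in C_i$. Since a lower (resp.\ upper) log-likelihood of evidence is obtained by a single bottom-up evaluation of $\min_w S_w(\lambda)$ (resp.\ $\max_w S_w(\lambda)$) followed by a logarithm, proving the corollary reduces to bounding $sL$ by $\mathcal{O}(nN)$.

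First I would recall, from the construction in Subsection \ref{subsec_weight}, that LearnCSPN stores each credal set $K_i$ at a sum node explicitly as a finite list of (at most) $N$ extreme points. Thus every $C_i$ is already given in $V$-representation, i.e.\ as the convex hull of its vertices. Next, I would invoke the standard linear-programming fact that a linear objective over a bounded polytope attains its optimum at a vertex. When the vertices are given explicitly, solving the LP $\min_{w_i \in C_i} \sum_j c_{ij} w_{ij}$ reduces to evaluating the linear functional at each stored extreme point and returning the smallest (or largest) value; this costs $\mathcal{O}(N)$ scalar comparisons, with the inner product cost already absorbed into the factor $s$ from the Mau\'a et al.\ bound. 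Hence $L = \mathcal{O}(N)$.

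Finally I would use Proposition \ref{prop_tree} to control $s$. Since internal nodes of the learned CSPN have at most one parent, the shared structure is tree-shaped above the leaves, so the number of arcs is $\mathcal{O}(n)$ and therefore $s = \mathcal{O}(n)$. Substituting into $\mathcal{O}(sL)$ gives $\mathcal{O}(nN)$ for a single sweep, and this bound applies to both $\min_w S_w(\lambda)$ and $\max_w S_w(\lambda)$, yielding the claimed complexity for upper and lower log-likelihoods.

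The only real obstacle is being rigorous about the $L = \mathcal{O}(N)$ reduction: one needs to argue that the points stored by LearnCSPN really are vertices of the convex hull they generate (so that the vertex-optimality principle applies to them), and that no auxiliary LP machinery is required beyond scanning the list. Both facts follow directly from the extraction procedure of Subsection \ref{subsec_weight}, which produces points lying on the boundary of the interval polytope; any redundant (non-vertex) points can be pruned in a preprocessing step without affecting asymptotics. With that in place, the corollary is immediate from Proposition \ref{prop_tree} and the cited theorem.
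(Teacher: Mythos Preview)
Your proposal is correct and follows essentially the same reasoning as the paper: replace the per-sum-node LP by enumeration over the stored extreme points, giving $L=\mathcal{O}(N)$, and multiply by the number of nodes. The paper's own justification is in fact far more informal than yours---it is a single sentence (``this is all the more true when dealing with extreme points'') rather than a proof---and it does not explicitly invoke Proposition~\ref{prop_tree} to bound $s$ by $\mathcal{O}(n)$; your version is a cleaner instantiation of the cited $\mathcal{O}(sL)$ theorem.
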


\subsection{Experimental Evaluations: Additional Details}
\subsubsection{Datasets}

In Table \ref{tab_dataset_stat}, we recall the characteristics of the datasets in terms of number of variables ($\|$V$\|$), number of instances in the training set (Train), number of instances in the validation set (Valid) and number of instances in the test set (Test).\footnote{Out of the 20 datasets present in \citep{gens2013learning}, we considered 16 datasets, which span various types of domains, and where variable numbers were as many as 1600, and the number of instances were as many as 28k. We did not consider 4 of them, one of which has about 300k instances, as our evaluations frequently ran out of memory on our machine as we increased the percentage of missing values. The issue is thus solely related to computational power:  reasonable subsets of these datasets are not problematic. 
} 

\begin{table}[ht!]
    \centering
    \caption{Dataset statistics}
    \label{tab_dataset_stat}
    \begin{tabular}{c||c|c|c|c}
        Dataset & $\|$V$\|$ & Train & Valid & Test \\
        \hline
        NLTCS & 16 & 16181 & 2157 & 3236\\
        Plants & 69 & 17412 & 2321 & 3482\\
        Audio & 100 & 15000 & 2000 & 3000\\
        Jester & 100 & 9000 & 1000 & 4116\\
        Netflix & 100 & 15000 & 2000 & 3000\\
        Retail & 135 & 22041 & 2938 & 4408\\
        Pumsb-star & 163 & 12262 & 1635 & 2452\\
        DNA & 180 & 1600 & 400 & 1186\\
        MSWeb & 294 & 29441 & 3270 & 5000\\
        Book & 500 & 8700 & 1159  & 1739\\
        EachMovie & 500 & 4524 & 1002 & 591\\
        WebKB & 839 & 2803 & 558 & 838\\
        Reuters-52 & 889 & 6532 & 1028 & 1540\\
        20 Newsgrp. & 910 & 11293 & 3764 & 3764\\
        BBC & 1058 & 1670 & 225 & 330\\
        Ad & 1556 & 2461 & 327 & 491\\
    \end{tabular}
\end{table}

\subsubsection{Inference Results: Graph Plots}
Figure \ref{fig_plot1} recall results from the paper plus the additional 14 graph plots of the remaining datasets. In the same way, we notice the similar results to the one from the first experiment (the accuracy of the learning algorithm).



\subsection{Discussion on Related Works}  
Mau\'a et al. \citep{DBLP:conf/isipta/MauaCCC17} developed credal sum-product networks, an imprecise extension of sum-product networks, as a motivation to avoid unreliability and overconfidence when performing classification (\textit{e.g.} in handwritten digit recognition tasks). In their work, they consider $\epsilon$-contamination to obtain the credal sets associated to edges of sum nodes in order to evaluate the accuracy of CSPNs in distinguishing between robust and non-robust classifications. More precisely, they look for the value of $\epsilon$ such that the CSPN obtained by local contamination on each sum node results in single classification under maximality. It differs from our work as we learn both the structure and weights of a CSPN based on missing values in the dataset. Intuitively, to compute tasks like classification with a CSPN learned using Algorithm \ref{algo_complete}, it searches for the optimal CSPN given the evidence and return a set of classes that are maximal. It also differs in the inference process as we focus on extreme points instead of linear programming  to find the maximum (or minimum) value to propagate. 

As a credal representation \citep{Levi1980-LEVTEO-7}, open-world probabilistic databases \citep{DBLP:conf/kr/CeylanDB16}   are also closely related to credal networks. They extend probabilistic databases using the notion of \textit{credal sets}. Their idea is to assume that facts that do not belong to the probabilistic database should not be considered as false, and in this sense, the semantics is open-world. Consequently, these facts are associated with a probability interval $[0, \lambda]$ where $\lambda$ is the threshold probability. This relates to our work in the sense that the probability for missing facts (that would be illustrated by all instances having a missing value for the fact) varies in a closed set $[0, \lambda']$ where the threshold $\lambda'$ is defined by a combination of the weights associated to sum nodes.

SPNs have received a lot of attention recently. For example, in \citep{hsu2017online} the authors propose an online structure learning technique for SPNs with Gaussian leaves. It begins by assuming that all variables are independent at first.  Then the network structure is updated
as a stream of data points is processed. They also allow a correlation in the network in the form of a multivariate Gaussian or a mixture distribution. In \citep{molina2018mixed,bueff2018tractable}, the authors  tackle the problem of tractable learning and querying in mixed discrete-continuous domains.

There are, of course, other approaches to representing imprecise probabilistic information, such as   belief functions \citep{shafer1976mathematical,denoeux2011maximum}. Possibilistic frameworks \citep{Zadeh-1978} have also been introduced to deal with imprecise and more specifically incomplete information. Many works have also been devoted to study the  properties and relationships between the two frameworks \citep{DBLP:conf/flairs/HaddadLLT17,DBLP:conf/ictai/BenferhatLT17}. 

Finally, it is worth remarking that open-world semantics and missing values is a major focus in database theory \citep{Libkin:2016:STL:2897141.2877206,DBLP:conf/ijcai/ConsoleGL17}, and it would be interesting to see if our work can be related to those efforts as well. 

 \pagenumbering{gobble}
\thispagestyle{empty}
\begin{figure}
\centering
\textbf{eachmovie}

\scalebox{.5}{
\begin{minipage}[t]{.8\textwidth}
  \centering
  \begin{tikzpicture}
\begin{axis}[xmax=5.5,
xlabel={\% evidence},
xtick={0,1,2,3,4,5},
xticklabels={0.0,0.1,0.2,0.3,0.4,0.5},
ylabel={Log-likelihood}]
\addplot coordinates {(0, -0.028703) (1, -0.026355) (2, -0.026558) (3, -0.025814) (4, -0.025934) (5, -0.025777) };
\addplot coordinates {(0, -0.024318) (1, -0.021803) (2, -0.021556) (3, -0.021010) (4, -0.021054) (5, -0.020757) };
\addplot coordinates {(0, -0.022634) (1, -0.020796) (2, -0.020939) (3, -0.020458) (4, -0.020815) (5, -0.020478) };
\end{axis}
\end{tikzpicture}
\end{minipage}}
\scalebox{.5}{
\begin{minipage}[t]{.5\textwidth}
  \centering
  \begin{tikzpicture}
\begin{axis}[xmax=4.5,
xlabel={\% query},
xtick={0,1,2,3,4},
xticklabels={0.1,0.2,0.3,0.4,0.5}]
\addplot coordinates {(0, -0.008812) (1, -0.017467) (2, -0.025867) (3, -0.034218) (4, -0.042993) };
\addplot coordinates {(0, -0.007128) (1, -0.014203) (2, -0.021136) (3, -0.027855) (4, -0.034880) };
\addplot coordinates {(0, -0.006990) (1, -0.013901) (2, -0.020654) (3, -0.027389) (4, -0.034355) };
\end{axis}
\end{tikzpicture}
\end{minipage}}

\textbf{msweb}

\scalebox{.5}{
\begin{minipage}[t]{.8\textwidth}
  \centering
  \begin{tikzpicture}
\begin{axis}[xmax=5.5,
xlabel={\% evidence},
xtick={0,1,2,3,4,5},
xticklabels={0.0,0.1,0.2,0.3,0.4,0.5},
ylabel={Log-likelihood}]
\addplot coordinates {(0, -0.003256) (1, -0.003001) (2, -0.003025) (3, -0.002809) (4, -0.002738) (5, -0.002812) };
\addplot coordinates {(0, -0.002305) (1, -0.002178) (2, -0.002181) (3, -0.002109) (4, -0.002057) (5, -0.002105) };
\addplot coordinates {(0, -0.002079) (1, -0.001990) (2, -0.002007) (3, -0.001955) (4, -0.001914) (5, -0.001968) };
\end{axis}
\end{tikzpicture}
\end{minipage}}
\scalebox{.5}{
\begin{minipage}[t]{.5\textwidth}
  \centering
  \begin{tikzpicture}
\begin{axis}[xmax=4.5,
xlabel={\% query},
xtick={0,1,2,3,4},
xticklabels={0.1,0.2,0.3,0.4,0.5}]
\addplot coordinates {(0, -0.001173) (1, -0.001882) (2, -0.002918) (3, -0.003810) (4, -0.004741) };
\addplot coordinates {(0, -0.000799) (1, -0.001377) (2, -0.002121) (3, -0.002811) (4, -0.003536) };
\addplot coordinates {(0, -0.000732) (1, -0.001268) (2, -0.001967) (3, -0.002620) (4, -0.003293) };
\end{axis}
\end{tikzpicture}
\end{minipage}}

\textbf{audio}

\scalebox{.5}{
\begin{minipage}[t]{.8\textwidth}
  \centering
  \begin{tikzpicture}
\begin{axis}[xmax=5.5,
xlabel={\% evidence},
xtick={0,1,2,3,4,5},
xticklabels={0.0,0.1,0.2,0.3,0.4,0.5},
ylabel={Log-likelihood}]
\addplot coordinates {(0, -0.012840) (1, -0.012290) (2, -0.011950) (3, -0.011945) (4, -0.011635) (5, -0.011680) };
\addplot coordinates {(0, -0.011052) (1, -0.010492) (2, -0.010226) (3, -0.010063) (4, -0.009758) (5, -0.009750) };
\addplot coordinates {(0, -0.010575) (1, -0.010132) (2, -0.009936) (3, -0.009838) (4, -0.009560) (5, -0.009648) };
\end{axis}
\end{tikzpicture}
\end{minipage}}
\scalebox{.5}{
\begin{minipage}[t]{.5\textwidth}
  \centering
  \begin{tikzpicture}
\begin{axis}[xmax=4.5,
xlabel={\% query},
xtick={0,1,2,3,4},
xticklabels={0.1,0.2,0.3,0.4,0.5}]
\addplot coordinates {(0, -0.003969) (1, -0.007918) (2, -0.011901) (3, -0.015581) (4, -0.019508) };
\addplot coordinates {(0, -0.003368) (1, -0.006759) (2, -0.010050) (3, -0.013107) (4, -0.016426) };
\addplot coordinates {(0, -0.003283) (1, -0.006573) (2, -0.009793) (3, -0.012831) (4, -0.016079) };
\end{axis}
\end{tikzpicture}
\end{minipage}}

\textbf{book}

\scalebox{.5}{
\begin{minipage}[t]{.8\textwidth}
  \centering
  \begin{tikzpicture}
\begin{axis}[xmax=5.5,
xlabel={\% evidence},
xtick={0,1,2,3,4,5},
xticklabels={0.0,0.1,0.2,0.3,0.4,0.5},
ylabel={Log-likelihood}]
\addplot coordinates {(0, -0.010719) (1, -0.010796) (2, -0.010533) (3, -0.010569) (4, -0.010271) (5, -0.010615) };
\addplot coordinates {(0, -0.006685) (1, -0.006593) (2, -0.006405) (3, -0.006424) (4, -0.006184) (5, -0.006378) };
\addplot coordinates {(0, -0.005389) (1, -0.005392) (2, -0.005247) (3, -0.005273) (4, -0.005121) (5, -0.005293) };
\end{axis}
\end{tikzpicture}
\end{minipage}}
\scalebox{.5}{
\begin{minipage}[t]{.5\textwidth}
  \centering
  \begin{tikzpicture}
\begin{axis}[xmax=4.5,
xlabel={\% query},
xtick={0,1,2,3,4},
xticklabels={0.1,0.2,0.3,0.4,0.5}]
\addplot coordinates {(0, -0.003635) (1, -0.007203) (2, -0.010715) (3, -0.013969) (4, -0.017231) };
\addplot coordinates {(0, -0.002193) (1, -0.004370) (2, -0.006483) (3, -0.008459) (4, -0.010492) };
\addplot coordinates {(0, -0.001801) (1, -0.003605) (2, -0.005350) (3, -0.006995) (4, -0.008672) };
\end{axis}
\end{tikzpicture}
\end{minipage}}

\textbf{jester}

\scalebox{.5}{
\begin{minipage}[t]{.8\textwidth}
  \centering
  \begin{tikzpicture}
\begin{axis}[xmax=5.5,
xlabel={\% evidence},
xtick={0,1,2,3,4,5},
xticklabels={0.0,0.1,0.2,0.3,0.4,0.5},
ylabel={Log-likelihood}]
\addplot coordinates {(0, -0.016757) (1, -0.016173) (2, -0.016056) (3, -0.015878) (4, -0.015737) (5, -0.015697) };
\addplot coordinates {(0, -0.015332) (1, -0.014784) (2, -0.014611) (3, -0.014381) (4, -0.014210) (5, -0.014157) };
\addplot coordinates {(0, -0.014965) (1, -0.014551) (2, -0.014465) (3, -0.014287) (4, -0.014237) (5, -0.014166) };
\end{axis}
\end{tikzpicture}
\end{minipage}}
\scalebox{.5}{
\begin{minipage}[t]{.5\textwidth}
  \centering
  \begin{tikzpicture}
\begin{axis}[xmax=4.5,
xlabel={\% query},
xtick={0,1,2,3,4},
xticklabels={0.1,0.2,0.3,0.4,0.5}]
\addplot coordinates {(0, -0.005273) (1, -0.010612) (2, -0.015902) (3, -0.021280) (4, -0.026304) };
\addplot coordinates {(0, -0.004801) (1, -0.009624) (2, -0.014393) (3, -0.019235) (4, -0.023818) };
\addplot coordinates {(0, -0.004737) (1, -0.009544) (2, -0.014354) (3, -0.019159) (4, -0.023778) };
\end{axis}
\end{tikzpicture}
\end{minipage}}

\textbf{netflix}

\scalebox{.5}{
\begin{minipage}[t]{.8\textwidth}
  \centering
  \begin{tikzpicture}
\begin{axis}[xmax=5.5,
xlabel={\% evidence},
xtick={0,1,2,3,4,5},
xticklabels={0.0,0.1,0.2,0.3,0.4,0.5},
ylabel={Log-likelihood}]
\addplot coordinates {(0, -0.018019) (1, -0.017523) (2, -0.017070) (3, -0.016955) (4, -0.016784) (5, -0.016820) };
\addplot coordinates {(0, -0.016777) (1, -0.016358) (2, -0.015975) (3, -0.015836) (4, -0.015686) (5, -0.015692) };
\addplot coordinates {(0, -0.016575) (1, -0.016319) (2, -0.016048) (3, -0.015955) (4, -0.015856) (5, -0.015841) };
\end{axis}
\end{tikzpicture}
\end{minipage}}
\scalebox{.5}{
\begin{minipage}[t]{.5\textwidth}
  \centering
  \begin{tikzpicture}
\begin{axis}[xmax=4.5,
xlabel={\% query},
xtick={0,1,2,3,4},
xticklabels={0.1,0.2,0.3,0.4,0.5}]
\addplot coordinates {(0, -0.005649) (1, -0.011352) (2, -0.016921) (3, -0.022606) (4, -0.028137) };
\addplot coordinates {(0, -0.005313) (1, -0.010581) (2, -0.015806) (3, -0.021151) (4, -0.026289) };
\addplot coordinates {(0, -0.005339) (1, -0.010641) (2, -0.015918) (3, -0.021278) (4, -0.026495) };
\end{axis}
\end{tikzpicture}
\end{minipage}}
\end{figure}
\begin{figure}
\centering
\textbf{nltcs}

\scalebox{.5}{
\begin{minipage}[t]{.8\textwidth}
  \centering
  \begin{tikzpicture}
\begin{axis}[xmax=5.5,
xlabel={\% evidence},
xtick={0,1,2,3,4,5},
xticklabels={0.0,0.1,0.2,0.3,0.4,0.5},
ylabel={Log-likelihood}]
\addplot coordinates {(0, -0.001951) (1, -0.001768) (2, -0.001585) (3, -0.001530) (4, -0.001424) (5, -0.001357) };
\addplot coordinates {(0, -0.001759) (1, -0.001625) (2, -0.001486) (3, -0.001441) (4, -0.001384) (5, -0.001351) };
\addplot coordinates {(0, -0.001628) (1, -0.001501) (2, -0.001364) (3, -0.001336) (4, -0.001275) (5, -0.001221) };
\end{axis}
\end{tikzpicture}
\end{minipage}}
\scalebox{.5}{
\begin{minipage}[t]{.5\textwidth}
  \centering
  \begin{tikzpicture}
\begin{axis}[xmax=4.5,
xlabel={\% query},
xtick={0,1,2,3,4},
xticklabels={0.1,0.2,0.3,0.4,0.5}]
\addplot coordinates {(0, -0.000378) (1, -0.001110) (2, -0.001501) (3, -0.002209) (4, -0.002910) };
\addplot coordinates {(0, -0.000366) (1, -0.001062) (2, -0.001418) (3, -0.002116) (4, -0.002814) };
\addplot coordinates {(0, -0.000336) (1, -0.000973) (2, -0.001316) (3, -0.001930) (4, -0.002573) };
\end{axis}
\end{tikzpicture}
\end{minipage}}

\textbf{plants}

\scalebox{.5}{
\begin{minipage}[t]{.8\textwidth}
  \centering
  \begin{tikzpicture}
\begin{axis}[xmax=5.5,
xlabel={\% evidence},
xtick={0,1,2,3,4,5},
xticklabels={0.0,0.1,0.2,0.3,0.4,0.5},
ylabel={Log-likelihood}]
\addplot coordinates {(0, -0.005113) (1, -0.004106) (2, -0.003561) (3, -0.003298) (4, -0.003090) (5, -0.002909) };
\addplot coordinates {(0, -0.004708) (1, -0.003964) (2, -0.003591) (3, -0.003376) (4, -0.003247) (5, -0.003126) };
\addplot coordinates {(0, -0.004830) (1, -0.004351) (2, -0.004107) (3, -0.003980) (4, -0.003898) (5, -0.003784) };
\end{axis}
\end{tikzpicture}
\end{minipage}}
\scalebox{.5}{
\begin{minipage}[t]{.5\textwidth}
  \centering
  \begin{tikzpicture}
\begin{axis}[xmax=4.5,
xlabel={\% query},
xtick={0,1,2,3,4},
xticklabels={0.1,0.2,0.3,0.4,0.5}]
\addplot coordinates {(0, -0.001058) (1, -0.002199) (2, -0.003266) (3, -0.004329) (4, -0.005332) };
\addplot coordinates {(0, -0.001050) (1, -0.002258) (2, -0.003408) (3, -0.004514) (4, -0.005612) };
\addplot coordinates {(0, -0.001205) (1, -0.002625) (2, -0.003972) (3, -0.005319) (4, -0.006637) };
\end{axis}
\end{tikzpicture}
\end{minipage}}

\textbf{ad}

\scalebox{.5}{
\begin{minipage}[t]{.8\textwidth}
  \centering
  \begin{tikzpicture}
\begin{axis}[xmax=5.5,
xlabel={\% evidence},
xtick={0,1,2,3,4,5},
xticklabels={0.0,0.1,0.2,0.3,0.4,0.5},
ylabel={Log-likelihood}]
\addplot coordinates {(0, -0.021920) (1, -0.015930) (2, -0.013792) (3, -0.012974) (4, -0.012544) (5, -0.012347) };
\addplot coordinates {(0, -0.018773) (1, -0.017148) (2, -0.016611) (3, -0.015826) (4, -0.015426) (5, -0.015668) };
\addplot coordinates {(0, -0.020450) (1, -0.019007) (2, -0.018820) (3, -0.018239) (4, -0.017854) (5, -0.018163) };
\end{axis}
\end{tikzpicture}
\end{minipage}}
\scalebox{.5}{
\begin{minipage}[t]{.5\textwidth}
  \centering
  \begin{tikzpicture}
\begin{axis}[xmax=4.5,
xlabel={\% query},
xtick={0,1,2,3,4},
xticklabels={0.1,0.2,0.3,0.4,0.5}]
\addplot coordinates {(0, -0.004326) (1, -0.008877) (2, -0.012701) (3, -0.017392) (4, -0.020997) };
\addplot coordinates {(0, -0.005343) (1, -0.010856) (2, -0.016175) (3, -0.021262) (4, -0.026263) };
\addplot coordinates {(0, -0.005928) (1, -0.012334) (2, -0.018534) (3, -0.024358) (4, -0.030169) };
\end{axis}
\end{tikzpicture}
\end{minipage}}

\textbf{bbc}

\scalebox{.5}{
\begin{minipage}[t]{.8\textwidth}
  \centering
  \begin{tikzpicture}
\begin{axis}[xmax=5.5,
xlabel={\% evidence},
xtick={0,1,2,3,4,5},
xticklabels={0.0,0.1,0.2,0.3,0.4,0.5},
ylabel={Log-likelihood}]
\addplot coordinates {(0, -0.229548) (1, -0.221410) (2, -0.223057) (3, -0.226534) (4, -0.222019) (5, -0.224331) };
\addplot coordinates {(0, -0.183006) (1, -0.173886) (2, -0.174204) (3, -0.175855) (4, -0.173500) (5, -0.173783) };
\addplot coordinates {(0, -0.165652) (1, -0.158977) (2, -0.159420) (3, -0.160652) (4, -0.158831) (5, -0.159119) };
\end{axis}
\end{tikzpicture}
\end{minipage}}
\scalebox{.5}{
\begin{minipage}[t]{.5\textwidth}
  \centering
  \begin{tikzpicture}
\begin{axis}[xmax=4.5,
xlabel={\% query},
xtick={0,1,2,3,4},
xticklabels={0.1,0.2,0.3,0.4,0.5}]
\addplot coordinates {(0, -0.075079) (1, -0.148734) (2, -0.223771) (3, -0.295403) (4, -0.369922) };
\addplot coordinates {(0, -0.058350) (1, -0.115533) (2, -0.173596) (3, -0.231431) (4, -0.289077) };
\addplot coordinates {(0, -0.053445) (1, -0.105788) (2, -0.159136) (3, -0.211265) (4, -0.264312) };
\end{axis}
\end{tikzpicture}
\end{minipage}}

\textbf{20 Newsgrp}

\scalebox{.5}{
\begin{minipage}[t]{.8\textwidth}
  \centering
  \begin{tikzpicture}
\begin{axis}[xmax=5.5,
xlabel={\% evidence},
xtick={0,1,2,3,4,5},
xticklabels={0.0,0.1,0.2,0.3,0.4,0.5},
ylabel={Log-likelihood}]
\addplot coordinates {(0, -0.049063) (1, -0.047770) (2, -0.047752) (3, -0.047151) (4, -0.046796) (5, -0.046739) };
\addplot coordinates {(0, -0.030130) (1, -0.028450) (2, -0.028187) (3, -0.027919) (4, -0.027637) (5, -0.027790) };
\addplot coordinates {(0, -0.028937) (1, -0.027887) (2, -0.027755) (3, -0.027569) (4, -0.027289) (5, -0.027460) };
\end{axis}
\end{tikzpicture}
\end{minipage}}
\scalebox{.5}{
\begin{minipage}[t]{.5\textwidth}
  \centering
  \begin{tikzpicture}
\begin{axis}[xmax=4.5,
xlabel={\% query},
xtick={0,1,2,3,4},
xticklabels={0.1,0.2,0.3,0.4,0.5}]
\addplot coordinates {(0, -0.015717) (1, -0.031431) (2, -0.047057) (3, -0.062280) (4, -0.078183) };
\addplot coordinates {(0, -0.009308) (1, -0.018570) (2, -0.027864) (3, -0.036957) (4, -0.046257) };
\addplot coordinates {(0, -0.009180) (1, -0.018310) (2, -0.027491) (3, -0.036545) (4, -0.045701) };
\end{axis}
\end{tikzpicture}
\end{minipage}}

\textbf{webkb}

\scalebox{.5}{
\begin{minipage}[t]{.8\textwidth}
  \centering
  \begin{tikzpicture}
\begin{axis}[xmax=5.5,
xlabel={\% evidence},
xtick={0,1,2,3,4,5},
xticklabels={0.0,0.1,0.2,0.3,0.4,0.5},
ylabel={Log-likelihood}]
\addplot coordinates {(0, -0.058170) (1, -0.057085) (2, -0.055656) (3, -0.055761) (4, -0.055202) (5, -0.055187) };
\addplot coordinates {(0, -0.048626) (1, -0.047391) (2, -0.046450) (3, -0.046478) (4, -0.046173) (5, -0.046398) };
\addplot coordinates {(0, -0.040382) (1, -0.039406) (2, -0.038630) (3, -0.038843) (4, -0.038640) (5, -0.038759) };
\end{axis}
\end{tikzpicture}
\end{minipage}}
\scalebox{.5}{
\begin{minipage}[t]{.5\textwidth}
  \centering
  \begin{tikzpicture}
\begin{axis}[xmax=4.5,
xlabel={\% query},
xtick={0,1,2,3,4},
xticklabels={0.1,0.2,0.3,0.4,0.5}]
\addplot coordinates {(0, -0.018472) (1, -0.036769) (2, -0.055608) (3, -0.073639) (4, -0.091887) };
\addplot coordinates {(0, -0.015212) (1, -0.030936) (2, -0.046506) (3, -0.061629) (4, -0.077030) };
\addplot coordinates {(0, -0.012667) (1, -0.025740) (2, -0.038805) (3, -0.051469) (4, -0.064323) };
\end{axis}
\end{tikzpicture}
\end{minipage}}
\end{figure}
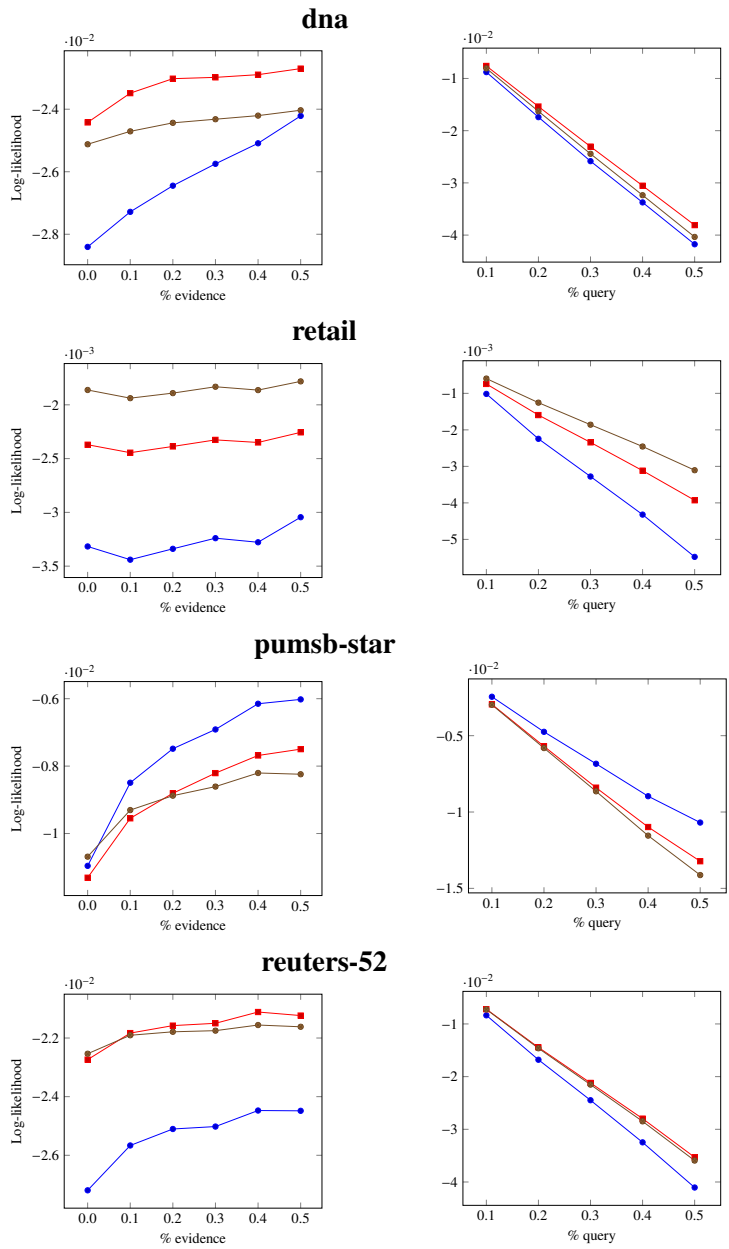
\begin{figure}
\centering
\textbf{dna}

\scalebox{.5}{
\begin{minipage}[t]{.8\textwidth}
  \centering
  \begin{tikzpicture}
\begin{axis}[xmax=5.5,
xlabel={\% evidence},
xtick={0,1,2,3,4,5},
xticklabels={0.0,0.1,0.2,0.3,0.4,0.5},
ylabel={Log-likelihood}]
\addplot coordinates {(0, -0.028405) (1, -0.027283) (2, -0.026447) (3, -0.025747) (4, -0.025090) (5, -0.024214) };
\addplot coordinates {(0, -0.024423) (1, -0.023486) (2, -0.023023) (3, -0.022981) (4, -0.022895) (5, -0.022701) };
\addplot coordinates {(0, -0.025121) (1, -0.024708) (2, -0.024436) (3, -0.024320) (4, -0.024205) (5, -0.024033) };
\end{axis}
\end{tikzpicture}
\end{minipage}}
\scalebox{.5}{
\begin{minipage}[t]{.5\textwidth}
  \centering
  \begin{tikzpicture}
\begin{axis}[xmax=4.5,
xlabel={\% query},
xtick={0,1,2,3,4},
xticklabels={0.1,0.2,0.3,0.4,0.5}]
\addplot coordinates {(0, -0.008782) (1, -0.017409) (2, -0.025840) (3, -0.033739) (4, -0.041756) };
\addplot coordinates {(0, -0.007605) (1, -0.015373) (2, -0.023050) (3, -0.030540) (4, -0.038089) };
\addplot coordinates {(0, -0.008073) (1, -0.016281) (2, -0.024435) (3, -0.032359) (4, -0.040355) };
\end{axis}
\end{tikzpicture}
\end{minipage}}

\textbf{retail}

\scalebox{.5}{
\begin{minipage}[t]{.8\textwidth}
  \centering
  \begin{tikzpicture}
\begin{axis}[xmax=5.5,
xlabel={\% evidence},
xtick={0,1,2,3,4,5},
xticklabels={0.0,0.1,0.2,0.3,0.4,0.5},
ylabel={Log-likelihood}]
\addplot coordinates {(0, -0.003317) (1, -0.003440) (2, -0.003339) (3, -0.003240) (4, -0.003278) (5, -0.003045) };
\addplot coordinates {(0, -0.002371) (1, -0.002445) (2, -0.002387) (3, -0.002326) (4, -0.002350) (5, -0.002256) };
\addplot coordinates {(0, -0.001862) (1, -0.001938) (2, -0.001891) (3, -0.001832) (4, -0.001863) (5, -0.001782) };
\end{axis}
\end{tikzpicture}
\end{minipage}}
\scalebox{.5}{
\begin{minipage}[t]{.5\textwidth}
  \centering
  \begin{tikzpicture}
\begin{axis}[xmax=4.5,
xlabel={\% query},
xtick={0,1,2,3,4},
xticklabels={0.1,0.2,0.3,0.4,0.5}]
\addplot coordinates {(0, -0.001015) (1, -0.002245) (2, -0.003278) (3, -0.004321) (4, -0.005479) };
\addplot coordinates {(0, -0.000742) (1, -0.001595) (2, -0.002343) (3, -0.003117) (4, -0.003929) };
\addplot coordinates {(0, -0.000595) (1, -0.001255) (2, -0.001859) (3, -0.002456) (4, -0.003105) };
\end{axis}
\end{tikzpicture}
\end{minipage}}

\textbf{pumsb-star}

\scalebox{.5}{
\begin{minipage}[t]{.8\textwidth}
  \centering
  \begin{tikzpicture}
\begin{axis}[xmax=5.5,
xlabel={\% evidence},
xtick={0,1,2,3,4,5},
xticklabels={0.0,0.1,0.2,0.3,0.4,0.5},
ylabel={Log-likelihood}]
\addplot coordinates {(0, -0.010959) (1, -0.008493) (2, -0.007484) (3, -0.006913) (4, -0.006150) (5, -0.006021) };
\addplot coordinates {(0, -0.011311) (1, -0.009549) (2, -0.008806) (3, -0.008208) (4, -0.007685) (5, -0.007497) };
\addplot coordinates {(0, -0.010687) (1, -0.009304) (2, -0.008878) (3, -0.008608) (4, -0.008204) (5, -0.008241) };
\end{axis}
\end{tikzpicture}
\end{minipage}}
\scalebox{.5}{
\begin{minipage}[t]{.5\textwidth}
  \centering
  \begin{tikzpicture}
\begin{axis}[xmax=4.5,
xlabel={\% query},
xtick={0,1,2,3,4},
xticklabels={0.1,0.2,0.3,0.4,0.5}]
\addplot coordinates {(0, -0.002451) (1, -0.004748) (2, -0.006840) (3, -0.008965) (4, -0.010690) };
\addplot coordinates {(0, -0.002950) (1, -0.005696) (2, -0.008404) (3, -0.010983) (4, -0.013224) };
\addplot coordinates {(0, -0.002978) (1, -0.005810) (2, -0.008644) (3, -0.011547) (4, -0.014129) };
\end{axis}
\end{tikzpicture}
\end{minipage}}

\textbf{reuters-52}

\scalebox{.5}{
\begin{minipage}[t]{.8\textwidth}
  \centering
  \begin{tikzpicture}
\begin{axis}[xmax=5.5,
xlabel={\% evidence},
xtick={0,1,2,3,4,5},
xticklabels={0.0,0.1,0.2,0.3,0.4,0.5},
ylabel={Log-likelihood}]
\addplot coordinates {(0, -0.027195) (1, -0.025665) (2, -0.025103) (3, -0.025020) (4, -0.024473) (5, -0.024483) };
\addplot coordinates {(0, -0.022737) (1, -0.021832) (2, -0.021577) (3, -0.021498) (4, -0.021113) (5, -0.021234) };
\addplot coordinates {(0, -0.022536) (1, -0.021906) (2, -0.021788) (3, -0.021749) (4, -0.021557) (5, -0.021617) };
\end{axis}
\end{tikzpicture}
\end{minipage}}
\scalebox{.5}{
\begin{minipage}[t]{.5\textwidth}
  \centering
  \begin{tikzpicture}
\begin{axis}[xmax=4.5,
xlabel={\% query},
xtick={0,1,2,3,4},
xticklabels={0.1,0.2,0.3,0.4,0.5}]
\addplot coordinates {(0, -0.008364) (1, -0.016796) (2, -0.024480) (3, -0.032476) (4, -0.041048) };
\addplot coordinates {(0, -0.007203) (1, -0.014458) (2, -0.021195) (3, -0.027970) (4, -0.035310) };
\addplot coordinates {(0, -0.007251) (1, -0.014618) (2, -0.021519) (3, -0.028488) (4, -0.035936) };
\end{axis}
\end{tikzpicture}
\end{minipage}}
\caption{Average log-likelihood normalised by the number of query variables. For each dataset, the left plot fixes the fraction of evidence variables
at 30\% and varies the fraction of query variables; the right plot fixes query variables at 30\% and varies evidence.}\label{fig_plot1}
\end{figure}

\end{document}